\documentclass[letterpaper]{article} 
\usepackage[preprint]{aaai2027}  
\usepackage[hyphens]{url}  
\usepackage{graphicx} 
\usepackage{natbib}  
\usepackage{caption} 
\usepackage{hyperref}
\usepackage{algorithm}
\usepackage{algorithmic}
\usepackage[table]{xcolor}
\usepackage{newfloat}
\usepackage{listings}
\DeclareCaptionStyle{ruled}{labelfont=normalfont,labelsep=colon,strut=off} 
\floatstyle{ruled}
\newfloat{listing}{tb}{lst}{}
\floatname{listing}{Listing}

\usepackage{booktabs}
\usepackage{amsmath}
\usepackage{amssymb}
\usepackage{amsthm}
\usepackage{graphicx}
\usepackage{multirow}
\usepackage{xcolor}
\DeclareMathOperator*{\argmin}{arg\,min}
\newtheorem{proposition}{Proposition}

\definecolor{financecolor}{gray}{0.93}  
\definecolor{miracolor}{gray}{1.0}     
\title{Equitable System-Prompt Selection via Constrained Mixed-Strategy GroupDRO}
\author{
    Mengyu Xu\textsuperscript{\rm 1},
    Qiaoxin Yang\textsuperscript{\rm 2},
    Zhihan Liu\textsuperscript{\rm 3},
    Ruiyao Xu\textsuperscript{\rm 3},\\
    Zachary Liu\textsuperscript{\rm 4},
    Kezhen Chen\textsuperscript{\rm 5},
    Chongyang Gao\textsuperscript{\rm 3}\corresponding
}
\affiliations{
\textsuperscript{\rm 1}The University of Chicago
\textsuperscript{\rm 2}SynAI Technologies, Inc.
\textsuperscript{\rm 3}Northwestern University
\textsuperscript{\rm 4}Dartmouth College
\textsuperscript{\rm 5}Analogy AI, Inc.\\
mxu09@uchicago.edu, yangqiaoxinn@qq.com, zhliu0627@gmail.com, ruiyaoxu2028@u.northwestern.edu, ruibo.liu.gr@dartmouth.edu, kezhenchen@analogyai.org, chongyanggao2026@u.northwestern.edu
}

\begin{document}

\maketitle
\pagestyle{plain}   
\thispagestyle{plain}

\begin{abstract}
Large language models are increasingly used for information seeking, yet semantically equivalent questions phrased in different ways can receive answers of considerably different quality. System prompts are widely employed to steer response behavior, but they are typically optimized for average-case quality, so some question phrasings may still receive incomplete or low-quality answers. To address this, we formulate a constrained mixed-strategy GroupDRO framework for system-prompt selection. Instead of optimizing the system-prompt text, the framework assigns weights to system prompts in an existing pool to minimize the worst-case information-quality loss across evaluation metrics and groups, while constraining the mean loss to stay close to that of average-based selection. Because pool generation and selection are decoupled, the method applies to any system-prompt pool and can leverage an ensemble of complementary system prompts rather than a single one. Across five LLMs on two bilingual medical and consumer-finance benchmarks, the constrained method reduces the Overall Mean, Worst 25\% Mean, and Worst by 13.1\%, 13.2\%, and 13.7\% on average relative to no mitigation while keeping overall quality close to Average selection. Its multi-prompt weights reveal complementarity across metric-group pairs. Code and data are available at
\url{https://github.com/Rainxu09/equitable-system-prompt-selection}.
\end{abstract}

\section{Introduction}
Large language models (LLMs) are increasingly used for everyday information seeking in high-stakes domains such as health and personal finance. In these settings, the same underlying question can be phrased in many ways. Ideally, semantically equivalent questions should receive answers of comparable quality. In practice, however, response quality can vary considerably with phrasing, and prior work shows that LLMs can respond differently to content-equivalent inputs depending on language, domain literacy, resource constraints, dialect, persona, or framing signals~\citep{xu2026,hofmann2024,gupta2024}. This disparity has a fairness dimension: users whose phrasing signals low domain literacy are precisely those least able to recover missing or diluted information on their own, and low health literacy is linked to poorer health outcomes~\citep{berkman2011}.

A common mitigation is to add a system prompt that steers response behavior, for example by instructing the model to preserve key information or maintain actionable detail. However, whether written by hand or generated by automated prompt optimizers~\citep{zhou2023,yang2024large,guo2024evo,agrawal2026gepa}, system prompts are typically selected for their average response quality. A system prompt that performs well on average can still leave particular question phrasings with incomplete or low-quality answers, and average-based selection provides no mechanism to detect or correct this.

We address this problem at the selection stage. Strong candidate system prompts are already abundant, coming from human design, LLM generation, or automatic prompt optimizers, so the practical question is not how to write another prompt but how to use the existing ones well. Furthermore, no single system prompt may serve all question phrasings well, so a method that combines complementary prompts offers protection that any single rewritten prompt cannot.
We formulate the problem in a practical regime: given a finite pool of candidate system prompts, predefined evaluation groups, and offline scores for each candidate, we ask how to select from the pool so that the weakest groups also receive high-quality answers. These conditions make the problem exactly solvable rather than a learning task. With a finite pool and fully observed scores, there is no unseen input space to generalize over; a fitted selector would only add estimation error, which could itself fall unevenly across groups.

Building on these considerations, we propose a constrained mixed-strategy GroupDRO method for system-prompt selection, which carries the worst-group principle of group distributionally robust optimization~\citep{sagawa2020dro} and minimax fairness~\citep{hashimoto2018,diana2021minimax} from model training to prompt selection. The selector assigns weights to system prompts so as to minimize the worst-case information-quality loss across evaluation metrics and groups, while a mean-loss constraint keeps average quality close to that of average-based selection. Because the objective is linear in the weights, the problem reduces to a linear program: model agnostic, free of additional training, and applicable to any prompt pool. 
Beyond selection, the optimal weights serve as a diagnostic. If the solution concentrates on a single system prompt, that prompt alone is sufficient under the worst-case objective; if it spreads weight across several, the pool contains complementary prompts that protect different evaluation metrics and groups.

We evaluate this approach across five LLMs on two bilingual information-seeking benchmarks: MIRA, an existing medical information benchmark~\citep{xu2026}, and a new controlled consumer finance benchmark that we construct. In our experiments, the constrained method improves the overall mean score, the mean score of the worst 25\% of metric--group pairs, and the worst-case score relative to the no-mitigation baseline, reducing them by 13.1\%, 13.2\%, and 13.7\% on average, respectively. It achieves consistent gains over single-prompt selectors on the weakest quartile in all 10 model--domain settings, while keeping overall average quality nearly unchanged. The selected weights consistently place mass on multiple system prompts, showing that different system prompts protect different evaluation metrics and groups. Our contributions can be summarized as follows.

\begin{itemize}
\item An equitable system-prompt selection framework with a constrained mixed-strategy GroupDRO selector. We formulate mitigation prompting as selection from an existing candidate pool rather than optimization of prompt text: the selector assigns weights to system prompts to minimize the worst-case information-quality loss across evaluation metrics and groups, subject to a mean-loss constraint that preserves average quality. The method is model agnostic, requires no additional training, and applies to pools from any source. The optimal weights also serve as a diagnostic of prompt complementarity in the pool.

\item A bilingual consumer-finance benchmark. We construct a controlled bilingual consumer-finance benchmark with 60 low-risk seed questions across six categories and fixed information checklists across question phrasings. Finance experts reviewed the seeds, checklists, and rubrics, and we audit agreement between a trained finance annotator and the LLM judge on 250 responses.
\item A cross-domain empirical study. Across bilingual medical and consumer-finance benchmarks and five LLMs, we show that constrained mixed-strategy GroupDRO system-prompt selection improves answer quality for the weakest metric--group pairs compared with the no-mitigation baseline and single-prompt selectors, while keeping average quality nearly unchanged.
\end{itemize}

\section{Background and Related Work}
\paragraph{Group-robust optimization and minimax fairness.}
GroupDRO and minimax fairness focus on the worst-performing group, not only the average case~\citep{sagawa2020dro,hashimoto2018,diana2021minimax}. Most prior work uses this idea during training or fine-tuning, where the method updates model weights and evaluates groups defined in the training data. We apply the same worst-group idea to system-prompt selection. Without changing the model or rewriting the system prompt, we select from an existing pool to reduce worst-group information-quality loss.

\paragraph{Prompt optimization and system-prompt selection.}
A broader review of prompt optimization and system-prompt selection
is provided in Appendix~\ref{app:relatedwork}. Unlike prompt
optimization methods that generate or modify prompt text, our method
operates on an existing system-prompt pool. Among existing
selection approaches, the closest to our setting is Prompt Risk
Control (PRC)~\citep{zollo2024}, which uses calibration data to select a prompt whose risk satisfies a chosen bound with high probability. We include PRC as a baseline in our experiments. 

Prompt optimization and selection methods generally seek a single
prompt that performs well under a distribution-level criterion, such
as average quality or a tail-risk bound. Such criteria do not identify or optimize the weakest evaluation groups: a prompt that is strong on average, or whose response-level tail is controlled, can still leave particular groups with weaker answers. Given an existing system-prompt pool, we instead select a system prompt, or a weighted mixture, that protects the weakest metric--group pairs while preserving average quality. To our knowledge, this is the first work to formulate system-prompt selection as a constrained group-robust optimization problem.

\paragraph{LLM fairness and question phrasing effects.}
Our work also builds on fairness and bias evaluation in LLMs. Prior work shows that LLMs can respond differently to content-equivalent inputs depending on language, literacy, dialect, persona, or framing signals~\citep{xu2026,hofmann2024,gupta2024}. In medical and financial information seeking, low domain literacy is especially important because users may be less able to fill in missing or diluted information. Prior work also links low health literacy to poorer health outcomes~\citep{berkman2011}. We do not treat all groups as fairness groups. Low health- and financial-literacy signals give the clearest fairness interpretation, while language, register, query skeleton, resource constraints, and expression framing mainly test robustness to different ways of asking the same question.

\section{Methods}
\subsection{Problem formulation}
\paragraph{Setup and notation.}
Let $\mathcal{P}=\{p_1,\dots,p_N\}$ denote a fixed pool of $N$ candidate system prompts. 
For an application domain, let $\mathcal{G}$ denote its predefined evaluation groups. The construction of $\mathcal{G}$ is domain-specific and is described in Section~\ref{sec:dataset}. Each group corresponds to a distinct profile of question characteristics applied to the same underlying question. Depending on the domain, these characteristics include language, literacy signal, resource constraint, register, question skeleton, and framing.
Let $\mathcal{M}$ denote the set of evaluation metrics. In our experiments, $\mathcal{M}=\{m_1, m_2, m_3\}$, where $m_1$ measures information dilution, $m_2$ measures completeness, and $m_3$ measures actionability. Information dilution~\citep{xu2026} refers to responses that address the question but omit or weaken substantive information, such as underlying mechanisms, relevant thresholds, or risk boundaries.

For system prompt $p\in\mathcal{P}$, evaluation group $g\in\mathcal{G}$, and evaluation metric $m\in\mathcal{M}$, let $L^{m}_{p,g}$ denote the empirical information-quality loss of system prompt $p$ on group $g$ for metric $m$. Lower values indicate better responses. We assume that all losses are nonnegative.
For each application domain, we construct a joint loss matrix. Rows correspond to candidate system prompts, columns correspond to metric–group pairs in $\mathcal{M}\times\mathcal{G}$, and each entry is the mean LLM-judge score of responses generated for system prompt $p$, group $g$, and metric $m$.
Because lower scores indicate better responses, our optimization treats these scores as losses to minimize. The equations below optimize jointly over evaluation metrics and groups.

\paragraph{Single-Prompt Selection.}
A common approach is to select one system prompt from the fixed pool and use it across all evaluation groups.
Average-loss selection selects one system prompt by minimizing mean loss across evaluation metrics and groups:
\begin{equation}
p_{\mathrm{avg}} = \argmin_{p\in\mathcal{P}}\frac{1}{|\mathcal{M}||\mathcal{G}|}\sum_{m\in\mathcal{M}}\sum_{g\in\mathcal{G}} L^{m}_{p,g}.
\label{eq:avg}
\end{equation}

\noindent However, a system prompt that performs well on average may still underperform on certain metric-group pairs. Pure GroupDRO~\cite{sagawa2020dro} addresses this by selecting the system prompt with the lowest worst-case loss:
\begin{equation}
p^\star_{\mathrm{pure}} = \argmin_{p\in\mathcal{P}}\max_{\substack{m\in\mathcal{M}\\g\in\mathcal{G}}}L^{m}_{p,g}.
\label{eq:pure}
\end{equation}

\noindent Both methods rely on a single system prompt, which can be limiting when different system prompts help different metric-group pairs. Moreover, a single prompt may not balance the trade-off between average quality and worst-case protection: although Pure GroupDRO finds the best single prompt under the joint worst-case objective, that prompt may still fail to provide comparable answer quality across all evaluation metrics and groups. These limitations motivate the constrained mixed-strategy formulation introduced next.

\subsection{Constrained Mixed-strategy GroupDRO}
Mixed strategies are a standard tool in minimax problems~\citep{vonneumann1928} and robust decision-making~\citep{sessa2020mixed}. We use this framework as a relaxation of single-system-prompt selection. Mixed-strategy GroupDRO uses the same fixed pool but assigns weights to system prompts, testing whether this weighted mixture can achieve lower worst-case loss across metric–group pairs than the best single system prompt. 
To prevent this worst-case gain from degrading average quality, we further constrain the mean loss of the mixture to stay close to that of Average selection. If several system prompts receive weight under this constraint, this indicates that different system prompts in the pool help different evaluation metrics or groups. In other words, the pool contains complementary system prompts.

\paragraph{Weighted system-prompt mixture.}
Mixed GroupDRO chooses a distribution over system prompts, which we interpret as weights over the system-prompt pool. Let $w = (w_1,\dots,w_N)\in\Delta_N$ denote these weights, where $\Delta_N = \{w \in \mathbb{R}^N_{\ge 0} : \sum_{i=1}^{N} w_i = 1\}$ and $w_i$ is the weight assigned to system prompt $p_i$. The expected loss for metric $m$ and evaluation group $g$ under weights $w$ is:
\begin{equation}
R_{m,g}(w)=\sum_{i=1}^{N}w_i L^{m}_{p_i,g}.
\label{eq:Rg}
\end{equation}

\noindent Mixed GroupDRO chooses the system-prompt weights that minimize the worst-case expected loss.

\paragraph{Mean-constrained mixture.}
To keep average quality comparable to Average selection, we constrain the mean loss of the weighted mixture. Let
\begin{equation}    
\bar{L}(w)=\frac{1}{|\mathcal{M}||\mathcal{G}|}\sum_{m\in\mathcal{M}}\sum_{g\in\mathcal{G}}R_{m,g}(w)
\end{equation}
denote the mean loss under weights $w$. Constrained Mixed GroupDRO chooses the system-prompt weights that minimize the worst-case expected loss while limiting the mean loss:
\begin{equation}
\begin{aligned}
w^\star_{\epsilon} = \arg\min_{w\in\Delta_N} \quad & \max_{\substack{m\in\mathcal{M}\\ g\in\mathcal{G}}} R_{m,g}(w) \\
\text{subject to} \quad & \bar{L}(w) \leq (1+\epsilon)\,\bar{L}_{\mathrm{avg}},
\label{eq:constrained}
\end{aligned}
\end{equation}
where\begin{equation}
\bar{L}_{\mathrm{avg}} = \frac{1}{|\mathcal{M}||\mathcal{G}|}\sum_{m\in\mathcal{M}}\sum_{g\in\mathcal{G}}L^{m}_{p_{\mathrm{avg}},g}
\end{equation}
denotes the mean loss achieved by $p_{\mathrm{avg}}$. We assume $\epsilon\ge 0$. In our main experiments, we set $\epsilon=0.005$, so the development mean loss can be at most 0.5\% higher than that of Average selection.

For later use, define the constrained mixed feasible set as
$
\Delta_{N,\epsilon} = \left\{w\in\Delta_N: \bar{L}(w) \le (1+\epsilon)\bar{L}_{\mathrm{avg}}\right\}.
$

\paragraph{Linear program formulation.}
The Constrained Mixed GroupDRO problem in Eq.~\eqref{eq:constrained} minimizes the maximum over finitely many metric–group pairs, and both $R_{m,g}(w)$ and $\bar L(w)$ are linear in $w$. It can therefore be written as a linear program. We introduce a variable $t$ that upper-bounds the expected loss of every metric–group pair:
\begin{equation}
\begin{aligned}
\min_{w,t}\quad & t \\
\text{s.t.}\quad
& R_{m,g}(w)\le t,
    \quad \forall m\in\mathcal{M},\ \forall g\in\mathcal{G},\\
& \bar L(w)\le (1+\epsilon)\bar L_{\mathrm{avg}},\\
& \sum_{i=1}^{N} w_i=1,\\
& w_i\ge 0,
    \quad i = 1, \dots, N.
\end{aligned}
\label{eq:lp}
\end{equation}
The second constraint limits the mean loss relative to average selection. Removing this constraint gives the unconstrained Mixed GroupDRO formulation.

For a fixed feasible $w$, the smallest feasible value of $t$ is $\max_{\substack{m\in\mathcal{M}\\g\in\mathcal{G}}}R_{m,g}(w),$ so the linear program in Eq.~\eqref{eq:lp} is equivalent to the Constrained Mixed GroupDRO objective in Eq.~\eqref{eq:constrained}. In other words, minimizing $t$ minimizes the worst-case expected loss while satisfying the mean-loss constraint.

To compare single-system-prompt and mixed selection under the same constraint, let $e_i\in\Delta_N$ denote the one-hot vector that places all weight on system prompt $p_i$. Define the feasible set of single system prompts as
\[
\mathcal{P}_{\epsilon} = \left\{p_i\in\mathcal{P}: \bar L(e_i) \le (1+\epsilon)\bar L_{\mathrm{avg}}\right\}.
\]
Since $R_{m,g}(e_i)=L^m_{p_i,g}$, $\bar L(e_i)$ is exactly the mean loss of system prompt $p_i$. For $\epsilon\ge0$, this set is nonempty because the one-hot vector corresponding to $p_{\mathrm{avg}}$ has mean loss  $\bar L_{\mathrm{avg}}\le(1+\epsilon) \bar L_{\mathrm{avg}}$, and therefore $p_{\mathrm{avg}}\in\mathcal P_\epsilon$.

The constrained pure and mixed worst-case values are
\[
V_{\mathrm{pure},\epsilon} = \min_{p_i\in\mathcal{P}_\epsilon}\max_{\substack{m\in\mathcal{M}\\g\in\mathcal{G}}}R_{m,g}(e_i)
\]
and, using the constrained feasible set $\Delta_{N,\epsilon}$,
\[
V_{\mathrm{mix},\epsilon} = \min_{\substack{w\in\Delta_N\\
\bar L(w)\le(1+\epsilon)\bar L_{\mathrm{avg}}}}
\max_{\substack{m\in\mathcal{M}\\g\in\mathcal{G}}} R_{m,g}(w).
\]
Here, $V_{\mathrm{pure},\epsilon}$ is the best worst-case value among single system prompts satisfying the mean-loss constraint. It is distinct from the Pure GroupDRO objective in Eq.~\eqref{eq:pure}.

\begin{proposition}[Constrained Mixed GroupDRO dominance]
\label{prop:collapse}
For \(\epsilon\ge 0\), $V_{\mathrm{mix},\epsilon} \leq V_{\mathrm{pure},\epsilon}$. Equality holds if and only if there is an optimal constrained mixed solution that places all weight on a single prompt in $\mathcal P_\epsilon$. When $V_{\mathrm{mix},\epsilon} < V_{\mathrm{pure},\epsilon}$, every optimal constrained mixed solution places positive weight on at least two system prompts.
\end{proposition}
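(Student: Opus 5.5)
The argument is an inclusion of feasible sets, followed by a short case analysis on where optimal solutions are supported. I would first record that both minima exist. $V_{\mathrm{pure},\epsilon}$ is a minimum over the finite, nonempty set $\mathcal P_\epsilon$, which is nonempty because $p_{\mathrm{avg}}\in\mathcal P_\epsilon$. For $V_{\mathrm{mix},\epsilon}$, the feasible set $\Delta_{N,\epsilon}$ is the intersection of the simplex with a closed half-space. It is therefore compact, and it is nonempty because it contains $e_{\mathrm{avg}}$, the one-hot vector of $p_{\mathrm{avg}}$. The objective $F(w)=\max_{m,g}R_{m,g}(w)$ is a finite maximum of linear functions, hence continuous, so the minimum is attained.

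For the inequality, take any $p_i\in\mathcal P_\epsilon$. By definition $\bar L(e_i)\le(1+\epsilon)\bar L_{\mathrm{avg}}$ and $e_i\in\Delta_N$, so $e_i\in\Delta_{N,\epsilon}$. Since $R_{m,g}(e_i)=L^m_{p_i,g}$, we get $F(e_i)=\max_{m,g}L^m_{p_i,g}$. Hence $V_{\mathrm{mix},\epsilon}\le F(e_i)$ for every $p_i\in\mathcal P_\epsilon$. Minimizing over $i$ gives $V_{\mathrm{mix},\epsilon}\le V_{\mathrm{pure},\epsilon}$.

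For the equivalence, argue each direction separately.
\begin{itemize}
\item ($\Rightarrow$) Suppose equality holds. Let $p_j$ attain $V_{\mathrm{pure},\epsilon}$. Then $e_j\in\Delta_{N,\epsilon}$ and $F(e_j)=V_{\mathrm{pure},\epsilon}=V_{\mathrm{mix},\epsilon}$, so $e_j$ is an optimal constrained mixed solution concentrated on a prompt in $\mathcal P_\epsilon$.
\item ($\Leftarrow$) Suppose some optimal constrained solution is a one-hot vector $e_j$. Feasibility of $e_j$ means $\bar L(e_j)\le(1+\epsilon)\bar L_{\mathrm{avg}}$, i.e.\ $p_j\in\mathcal P_\epsilon$. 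Then $V_{\mathrm{pure},\epsilon}\le F(e_j)=V_{\mathrm{mix},\epsilon}$, which combined with the inequality gives equality.
\end{itemize}
The point needing care is that any one-hot optimal mixed solution automatically lies in $\mathcal P_\epsilon$. This holds because the constraint defining $\Delta_{N,\epsilon}$ restricted to vertices $e_j$ is exactly the constraint defining $\mathcal P_\epsilon$. Beyond this, the proposition is essentially bookkeeping.

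The last claim is the contrapositive of ($\Leftarrow$). Every $w\in\Delta_N$ has at least one positive coordinate, and if only one coordinate is positive then $w=e_j$ for some $j$. So if $V_{\mathrm{mix},\epsilon}<V_{\mathrm{pure},\epsilon}$ and some optimal solution had positive weight on only one prompt, it would be one-hot. By ($\Leftarrow$) that would force equality, a contradiction. Hence every optimal constrained mixed solution has at least two positive weights.
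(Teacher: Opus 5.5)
Your proposal is correct and follows the paper's proof essentially step for step. Both arguments establish existence from the finiteness of $\mathcal P_\epsilon$ and the compactness of $\Delta_{N,\epsilon}$, get the inequality from the inclusion of feasible one-hot vectors in $\Delta_{N,\epsilon}$, prove both directions of the equivalence using that feasibility of $e_j$ is exactly $p_j\in\mathcal P_\epsilon$, and obtain the support claim by ruling out one-hot optima (the paper states this last step directly, where you phrase it as the contrapositive of ($\Leftarrow$); the content is the same).
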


The proof follows because every mean-feasible single system prompt corresponds to a one-hot vector in the constrained mixed feasible set $\Delta_{N,\epsilon}$. Thus, feasible single-prompt choices form a subset of the constrained mixed feasible set, and the mixed optimum cannot be worse than the best feasible single prompt. Equality holds when a one-hot vector attains the mixed optimum. A full proof is provided in Appendix~\ref{app:proof}.

The inequality is a development-matrix guarantee, not a test guarantee. We therefore use Constrained Mixed GroupDRO to diagnose system-prompt-pool complementarity. If an optimal constrained mixed solution places all weight on one system prompt, the best mean-feasible single system prompt is already minimax-optimal under the constrained objective. If $V_{\mathrm{mix},\epsilon} < V_{\mathrm{pure},\epsilon}$, no mean-feasible single system prompt achieves the same worst-case development loss, providing evidence of system-prompt complementarity.

\section{Experiments}
\subsection{System-prompt Pool Construction}
Our method takes the candidate system-prompt pool $\mathcal{P}$ as input rather than optimizing system-prompt text directly. It can be constructed from sources, including human-written system prompts, LLM-generated system prompts, prompt-optimization methods, or combinations of these sources.

In our experiments, we generate $\mathcal{P}$ with GPT-5.4~\citep{openai2026gpt54} using human-specified mitigation goals, then manually review the system prompts before selection. Each domain uses a fixed pool of 55 system prompts, organized into 11 mitigation families with 5 wording variants per family. The families target broad mitigation goals such as preserving checklist information, avoiding over-referral, maintaining actionable detail, and adapting explanations without diluting domain content. Within each domain, the same system-prompt pool is used across all evaluation metrics and groups for all four selectors: Average selection, Pure GroupDRO, Mixed GroupDRO, and Constrained Mixed GroupDRO. This lets us test whether selecting from a general system-prompt pool can improve worst-case robustness without writing separate system prompts for different evaluation metrics or groups.

\subsection{Benchmarks and Evaluation Groups}
\label{sec:dataset}
We evaluate on MIRA, an existing bilingual medical-information benchmark~\citep{xu2026}, and on a new consumer-finance benchmark constructed with the same controlled evaluation design. Our finance benchmark contains 60 low-risk consumer-finance seed questions, balanced across six categories: securities, trusts, insurance, consumer protection, credit and credit reporting, and deposits and payments, with 10 seeds per category. For each seed, we define a checklist of the key information that a useful answer should include, and use the same checklist across all phrasings of that question. We create 24 finance groups by crossing two languages (English and Chinese), two financial-literacy signals (high and low), two resource constraints (high and low), and three expression frames (direct, confusion, and misconception), yielding 1,440 finance question variants. Resource constraint indicates whether the user has high or low assets. Expression frame shows direct, confused, or misconception-based phrasing.

The consumer-finance responses were annotated by a trained finance expert. Before annotation, the finance seeds, checklists, and scoring rubrics were additionally reviewed by domain experts, including a professor in finance and economics. Appendix~\ref{app:finance-agreement} reports agreement between the finance annotator and the LLM judge on 250 audited responses. Exact agreement is the proportion of identical ratings, adjacent agreement is the proportion of ratings that differ by at most one point, and QWK denotes quadratic weighted kappa. Agreement is consistently high across the three evaluation metrics, with exact agreement ranging from 0.804 to 0.924, adjacent agreement from 0.964 to 0.996, and QWK from 0.832 to 0.923. Each domain contains 60 underlying questions, with 20 used for development and 40 held out for testing. The split preserves coverage across categories, language, literacy signal, and question framing. Responses are scored on three information-quality metrics: $m_1$ (information dilution), $m_2$ (completeness), and $m_3$ (actionability), with lower scores indicating better responses. All three metrics are used jointly for system-prompt selection and evaluation.

\subsection{Experimental Protocol}
\label{sec:pro}
We evaluate five LLMs: Qwen36Plus~\citep{yang_qwen3_2025}, DeepSeek-V4Pro~\citep{deepseek2026deepseekv4}, GLM5~\citep{glm5team2026}, Gemma4-31B~\citep{gemmateam2026}, and Llama4Scout~\citep{llama4}. For each domain, we evaluate 60 seed questions across 24 evaluation groups, yielding 1,440 question variants in total. Each model is run on the 55-system-prompt pool and on a no-mitigation baseline, producing 792,000 system-prompt-pool responses and 14,400 baseline responses across the two domains, for 806,400 model responses. Responses are scored by GPT-5.4-mini~\citep{gpt54mini} as the LLM judge~\citep{zheng_judging_2023} using the MIRA scoring framework, adapted to consumer-finance content while keeping the same score direction and metrics. Baseline responses are scored in the same way but are not included in system-prompt selection. They are used only to measure information-quality differences before mitigation.

For each model and domain, we score every system prompt on all metric–group pairs in the development set. This gives a joint development score matrix with system prompts as rows and the 72 metric–group pairs as columns. We then apply four selectors to the same matrix. Average selection minimizes the mean score across all pairs, Pure GroupDRO minimizes the worst-case score, Mixed GroupDRO minimizes the worst-case score using system-prompt weights, and Constrained Mixed GroupDRO solves the LP in \eqref{eq:lp} with the mean-loss constraint. For Constrained Mixed GroupDRO, we compare $\epsilon\in\{0,0.005,0.01,0.02\}$ using the development-set tradeoff between the Overall mean and the Worst 25\% mean. We use $\epsilon=0.005$ in the main experiments because it lies on the development Pareto frontier in both domains. 

As an additional single-system-prompt baseline, we evaluate Prompt Risk Control (PRC)~\citep{zollo2024} using response-level development losses. For each candidate system prompt, we construct a Bonferroni-corrected one-sided KS upper bound on CVaR at 25\% and select the prompt with the smallest bound. Because the controlled question variants are not fully independent, we use PRC as a comparative selector rather than claiming nominal coverage. We evaluate the system prompts selected by Average selection, Pure GroupDRO, and PRC, along with the weights selected by Constrained Mixed GroupDRO and the unconstrained Mixed GroupDRO comparator, on held-out test questions.

We report three evaluation statistics. The Overall Mean is averaged over all 72 metric-group pairs, the Worst 25\% Mean is averaged over the 18 highest-loss pairs, and the worst-case score is the maximum score among all 72 pairs. We also compare Constrained Mixed GroupDRO with Average selection across all model-question-variant-metric scores in the full evaluation set. This analysis reports how many scores improve, worsen, or remain unchanged, the mean change among the improved and worsened scores, and the overall net change.

For the candidate-growth analysis, we sample 500 system-prompt subsets of each size $K\in\{5,15,25,35,45\}$ from the same fixed pool of 55 system prompts and refit the selectors on each subset. For $K=55$, we use the full pool once. The development and test questions remain unchanged. We then track how the Overall Mean, Worst 25\% Mean, and worst-case score change as $K$ grows.

\subsection{Validating the Evaluation Groups}
\label{sec:validity}

A worst-group objective is meaningful only if the groups reflect real differences in response quality. The grouping factors are task-motivated: language, domain-literacy signal, and question framing describe how the same question is phrased. They are not result-driven groups. Empirically, both consumer-finance and MIRA baseline responses show clear gaps between average and worst-group scores. The gaps are positive across all model-domain-metric settings, with detailed results reported in Appendix~\ref{app:gaps}. We therefore use the 24 groups to measure robustness across different ways of phrasing the same question.

\begin{table}[t]
\centering
{\small
\setlength{\tabcolsep}{2.2pt}
\begin{tabular}{@{}lccc@{}}
\toprule
\textbf{Predictor}
& $\mathbf{m_1}$ & $\mathbf{m_2}$ & $\mathbf{m_3}$ \\
\midrule
\multicolumn{4}{@{}l}{\textit{MIRA}} \\
Chinese (vs. Eng.) & $-0.096^{*}$ & $-0.055$ & $-0.049$ \\
Low HLS (vs. High) & $0.118^{***}$ & $0.114^{***}$ & $0.026$ \\
Colloquial (vs. Formal) & $0.022$ & $0.010$ & $-0.007$ \\
Skeleton S2 (vs. S1) & $0.011$ & $0.001$ & $0.006$ \\
Skeleton S3 (vs. S1) & $0.028^{*}$ & $0.014$ & $0.012$ \\
\midrule
\multicolumn{4}{@{}l}{\textit{Consumer finance}} \\
Chinese (vs. Eng.) & $0.006$ & $0.025$ & $0.003$ \\
Low FLS (vs. High) & $0.094^{***}$ & $0.094^{***}$ & $0.060^{***}$ \\
Low asset (vs. High) & $0.017^{*}$ & $0.019^{**}$ & $-0.010$ \\
Confusion (vs. Direct) & $0.045^{***}$ & $0.045^{***}$ & $0.041^{***}$ \\
Misconception (vs. Direct) & $0.001$ & $-0.028$ & $-0.024$ \\
\midrule
\multicolumn{4}{@{}l@{}}{%
\textit{Note.} Positive coefficients indicate worse outcomes.}\\
\multicolumn{4}{@{}l@{}}{%
$^{*}p<.05$, $^{**}p<.01$, and $^{***}p<.001$.}\\
\multicolumn{4}{@{}l@{}}{%
Full model-specific results appear in Appendix~\ref{app:group-validity}.}\\
\bottomrule
\end{tabular}
}
\caption{Controlled score differences pooled across five models. Reference groups are shown in parentheses.}
\label{tab:group-validity-summary}
\end{table}

To validate the evaluation groups, we fit fixed-effects regressions on development system-prompt-pool responses, pooling over five response-generating models and controlling for seed question, system prompt, and model, with standard errors clustered by seed question. Table~\ref{tab:group-validity-summary} reports representative coefficients. The strongest and most consistent effects come from literacy signals, while question structure, language, and asset/register factors also affect scores in domain-specific ways. This supports our use of full evaluation groups for joint worst-case system-prompt selection.

A matched-pair placebo test further supports the literacy results. In both domains, the observed low--high literacy differences exceed all 1000 random label flips ($p_{\mathrm{perm}}=1/1001<0.001$). Detailed results are reported in Appendix~\ref{app:placebo}.

Our groups are based on how the question is phrased, not on inferred user identity. We do not ask the model to infer a person's real education level, income, health status, or financial expertise. Instead, we vary the phrasing of the same information need while keeping the underlying checklist fixed. We use fairness language only for the literacy factor because low health- or financial-literacy signals may come from people who are less able to fill in missing information on their own. Other factors, such as language, register, skeleton, asset level, and question frame, are used to measure robustness to different ways of asking the same question.

\subsection{Results}
\subsubsection{Constrained Mixed GroupDRO Improves Average and Worst-Case Quality}
Tables~\ref{tab:mira} and~\ref{tab:finance} show that Constrained Mixed GroupDRO improves both average and worst-case quality over the no-mitigation baseline for all five models in both domains. All results use a single set of system-prompt weights jointly optimized over the 72 metric–group pairs, with development mean loss constrained to at most 0.5\% above Average selection.

The effect is especially clear in consumer finance, where Constrained Mixed GroupDRO further improves answer quality even when baseline scores are already low. For Qwen36Plus, the mean/worst scores decrease from 1.421/1.750 to 1.266/1.513. For DeepSeek-V4Pro, they decrease from 1.497/1.950 to 1.280/1.458. The same overall pattern holds for the remaining models and in MIRA.

\begin{table}[t]
\centering
{\small
\setlength{\tabcolsep}{0.85mm}
\begin{tabular}{@{}llcccccc@{}}
\toprule
Model & Stat. & Base. & Avg & PRC & GDRO & Mix. & Con. Mix. \\
\midrule
Qwen
& All   & 2.256 & \textbf{1.872} & 1.890 & 1.922 & 1.896 & \underline{1.894} \\
& 25\%  & 2.412 & 2.081 & 2.129 & 2.117 & 2.071 & \textbf{2.077} \\
& Worst & 2.605 & 2.200 & 2.225 & \textbf{2.184} & 2.138 & \textbf{2.184} \\
\midrule

DeepSeek
& All   & 2.191 & 1.927 & \textbf{1.868} & 1.927 & 1.892 & 1.893 \\
& 25\%  & 2.361 & 2.134 & 2.117 & 2.134 & 2.070 & \textbf{2.073} \\
& Worst & 2.500 & 2.211 & 2.250 & 2.211 & 2.186 & \textbf{2.195} \\
\midrule

GLM5
& All   & 2.430 & \textbf{1.989} & 2.019 & 2.005 & 2.006 & \underline{1.993} \\
& 25\%  & 2.574 & 2.203 & 2.250 & 2.242 & 2.226 & \textbf{\underline{2.202}} \\
& Worst & 2.650 & \textbf{2.300} & 2.425 & 2.316 & 2.335 & 2.305 \\
\midrule

Gemma
& All   & 2.539 & \textbf{2.107} & 2.210 & \textbf{2.107} & 2.172 & \underline{2.113} \\
& 25\%  & 2.678 & 2.286 & 2.454 & 2.286 & 2.312 & \textbf{\underline{2.283}} \\
& Worst & 2.800 & 2.375 & 2.525 & 2.375 & 2.373 & \textbf{2.366} \\
\midrule

Llama
& All   & 2.747 & 2.598 & \textbf{2.582} & 2.598 & 2.622 & \underline{2.608} \\
& 25\%  & 2.942 & 2.773 & 2.776 & 2.773 & 2.768 & \textbf{\underline{2.759}} \\
& Worst & 3.125 & 2.900 & 2.900 & 2.900 & 2.878 & \textbf{2.865} \\
\bottomrule
\end{tabular}
}
\caption{Held-out MIRA test results.}
\label{tab:mira}
\end{table}

\begin{table}[t]
\centering
{\small
\setlength{\tabcolsep}{0.85mm}
\begin{tabular}{@{}llcccccc@{}}
\toprule
Model & Stat. & Base. & Avg & PRC & GDRO & Mix. & Con. Mix. \\
\midrule
Qwen
& All   & 1.421 & \textbf{1.264} & \textbf{1.264} & 1.284 & 1.294 & \underline{1.266} \\
& 25\%  & 1.629 & 1.390 & 1.390 & 1.394 & 1.394 &
  \textbf{\underline{1.383}} \\
& Worst & 1.750 & 1.550 & 1.550 & \textbf{1.475} & 1.483 & 1.513 \\
\midrule

DeepSeek
& All   & 1.497 & \textbf{1.260} & \textbf{1.260} & 1.287 & 1.280 & 1.280 \\
& 25\%  & 1.715 & 1.407 & 1.407 & 1.397 & 1.375 & \textbf{1.383} \\
& Worst & 1.950 & \textbf{1.450} & \textbf{1.450} & 1.500 & 1.438 & 1.458 \\
\midrule

GLM5
& All   & 1.677 & 1.481 & 1.481 & 1.482 & 1.509 &
  \textbf{\underline{1.459}} \\
& 25\%  & 1.896 & 1.694 & 1.694 & 1.722 & 1.667 &
  \textbf{\underline{1.651}} \\
& Worst & 2.025 & 1.825 & 1.825 & 1.850 & 1.769 & \textbf{1.787} \\
\midrule

Gemma
& All   & 2.059 & 1.720 & 1.720 & 1.757 & 1.757 &
  \textbf{\underline{1.692}} \\
& 25\%  & 2.369 & 1.987 & 1.987 & 1.999 & 1.959 &
  \textbf{\underline{1.916}} \\
& Worst & 2.425 & 2.175 & 2.175 & 2.225 & 2.156 &
  \textbf{2.043} \\
\midrule

Llama
& All   & 2.415 & 2.299 & 2.299 & 2.299 & 2.306 &
  \textbf{\underline{2.295}} \\
& 25\%  & 2.794 & 2.740 & 2.708 & 2.708 & 2.685 &
  \textbf{\underline{2.683}} \\
& Worst & 3.125 & 3.050 & 2.975 & 2.975 & 2.951 &
  \textbf{2.956} \\
\bottomrule
\end{tabular}
}
\caption{Held-out consumer-finance test results.}
\label{tab:finance}
\end{table}

\subsubsection{Mean-Tail Pareto Tradeoff}
Constrained Mixed GroupDRO introduces a tradeoff between mean quality and protection of poorly performing metric-group pairs. We vary the allowed development mean increase over Average selection across $\{0\%,0.5\%,1\%,2\%\}$ and the unconstrained setting. For each setting, we plot the overall mean score against the mean score of the worst-performing 25\% of metric-group pairs, averaged over five models. Lower values on both axes are better. This shows how much the weakest 25\% improve for a given change in average quality. The optimization itself still minimizes the joint worst-case score. We also include PRC and Pure GroupDRO as single-system-prompt comparators. Because neither comparator varies $\epsilon$, they are shown as separate points.

Figure~\ref{fig:test-pareto} shows the held-out mean-tail tradeoff for consumer finance and MIRA. The corresponding development analysis is reported in Appendix~\ref{app:pareto}. Constrained Mixed (0.5\%) provides a strong balance between overall mean quality and the Worst 25\% mean on both development and held-out data. In consumer finance, it achieves the lowest overall mean and worst-25\% scores among the tested settings. In MIRA, it substantially improves the worst-25\% mean while keeping the overall mean close to Average selection. It also achieves lower overall mean and worst-25\% mean than PRC and Pure GroupDRO in both domains. We therefore use the 0.5\% constraint as the main setting in our experiments.

To examine whether these tail improvements come at the expense of other scores, Table~\ref{tab:overall-score-changes} compares Constrained Mixed GroupDRO with Average selection across the full evaluation set. Although slightly more scores worsen than improve on held-out data, the improvements are larger. The resulting overall net change is $-0.007$ in consumer finance and $+0.002$ in MIRA, showing that the overall mean remains nearly unchanged. Together with the mean-tail tradeoff, this shows that Constrained Mixed GroupDRO improves poorly performing metric-group pairs while keeping overall mean quality close to Average selection.

\begin{figure}[t]
    \centering
    \includegraphics[width=\linewidth]{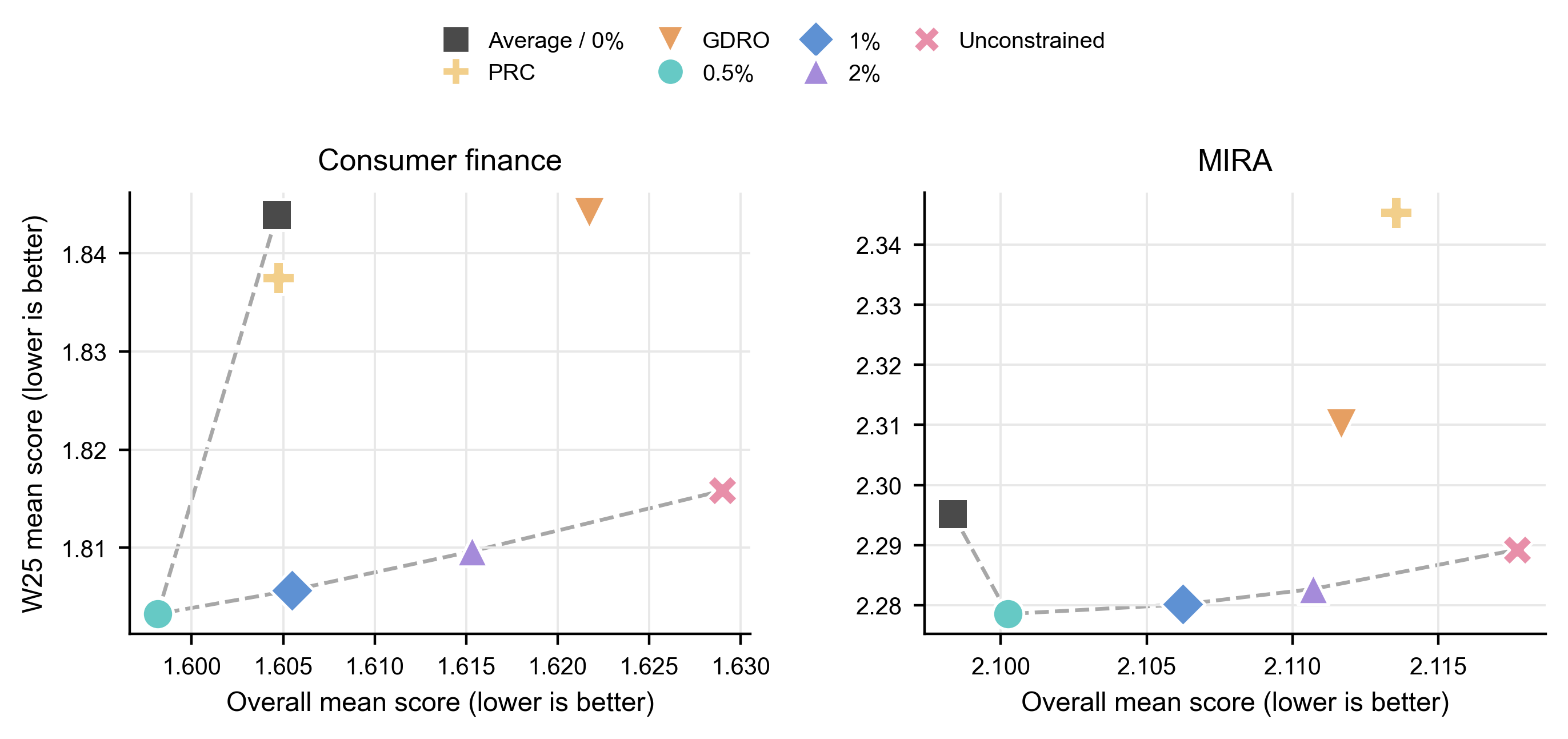}
    \caption{Held-out mean-tail tradeoff for consumer finance and MIRA}
    \label{fig:test-pareto}
\end{figure}

\begin{table}[t]
\centering
{\small
\setlength{\tabcolsep}{0.4mm}
\begin{tabular}{@{}lrrrrrr@{}}
\toprule
& \multicolumn{2}{c}{Improved}
& \multicolumn{2}{c}{Worsened}
& \multicolumn{1}{c}{Unch.} & \\
\cmidrule(lr){2-3}
\cmidrule(lr){4-5}
\cmidrule(lr){6-6}
Test
& \multicolumn{1}{c}{$n$ (\%)}
& \multicolumn{1}{c}{$\bar{\Delta}$}
& \multicolumn{1}{c}{$n$ (\%)}
& \multicolumn{1}{c}{$\bar{\Delta}$}
& \multicolumn{1}{c}{$n$ (\%)}
& \multicolumn{1}{c}{Net $\Delta$} \\
\midrule
Fin.
& 2841 (19.7) & $-0.55$
& 3160 (21.9) & $+0.47$
& 8399 (58.3) & $-0.007$ \\

MIRA
& 3840 (27.1) & $-0.28$
& 4189 (29.6) & $+0.26$
& 6128 (43.3) & $+0.002$ \\
\bottomrule
\end{tabular}
}
\caption{Full evaluation set test score change for Constrained Mixed GroupDRO relative to Average selection. 243 MIRA comparisons with N/A ratings are excluded.}
\label{tab:overall-score-changes}
\end{table}

\subsubsection{Constrained Mixed GroupDRO Reveals System-Prompt Complementarity}
Throughout this subsection, worst-case denotes the maximum score among all 72 metric–group pairs, whereas Worst 25\% mean denotes the average score across the 18 highest-loss pairs.

On development data, Constrained Mixed GroupDRO lowers the joint worst-case score relative to Average selection for all five models in both domains and relative to Pure GroupDRO in 9 of the 10 model-domain settings. The full development results are reported in Appendix~\ref{app:dev-results}. Every constrained mixed solution assigns positive weight to multiple system prompts, with support sizes ranging from 2 to 6 in MIRA and from 3 to 4 in consumer finance. This provides direct evidence of complementarity across metric-group pairs.

In Tables~\ref{tab:mira} and~\ref{tab:finance}, bold marks the best score among Average selection, Pure GroupDRO, PRC, and Constrained Mixed GroupDRO for each statistic, while underlining marks Constrained Mixed overall mean or worst 25\% mean scores that are lower than unconstrained Mixed GroupDRO. On test data, Constrained Mixed GroupDRO lowers the worst-case score in 4/5 consumer-finance settings compared with each single-system-prompt selector. In MIRA, the corresponding counts are 4/5, 4/5 and 5/5. 
PRC remains competitive. On MIRA DeepSeek, it improves the held-out Overall Mean and Worst 25\% Mean relative to Average selection, although its Worst score is higher.

On test data, Constrained Mixed GroupDRO achieves a lower Worst 25\% mean than Average selection, Pure GroupDRO, and PRC in all 10 model-domain settings. Paired seed-level permutation tests confirm these differences after Holm correction (10,000 permutations. All
$p_{\mathrm{Holm}}\le .002$) in Appendix~\ref{app:selector-significance}.
The comparison with PRC also shows that response-level tail control does not provide the same protection as directly optimizing performance across metric--group pairs.
These improvements in worst-case and Worst 25\% quality do not come at a high cost to overall mean quality. Across all 10 model--domain settings, the Constrained Mixed Overall Mean remains within 0.026 of the better single-system-prompt result and is lower in three settings.

\subsubsection{System-Prompt Weight Analysis}
Figure~\ref{fig:prompt-weight} shows the weights assigned to individual system prompts by Constrained Mixed GroupDRO for each model and domain.

\begin{figure}[t]
    \centering
    \includegraphics[width=\linewidth]{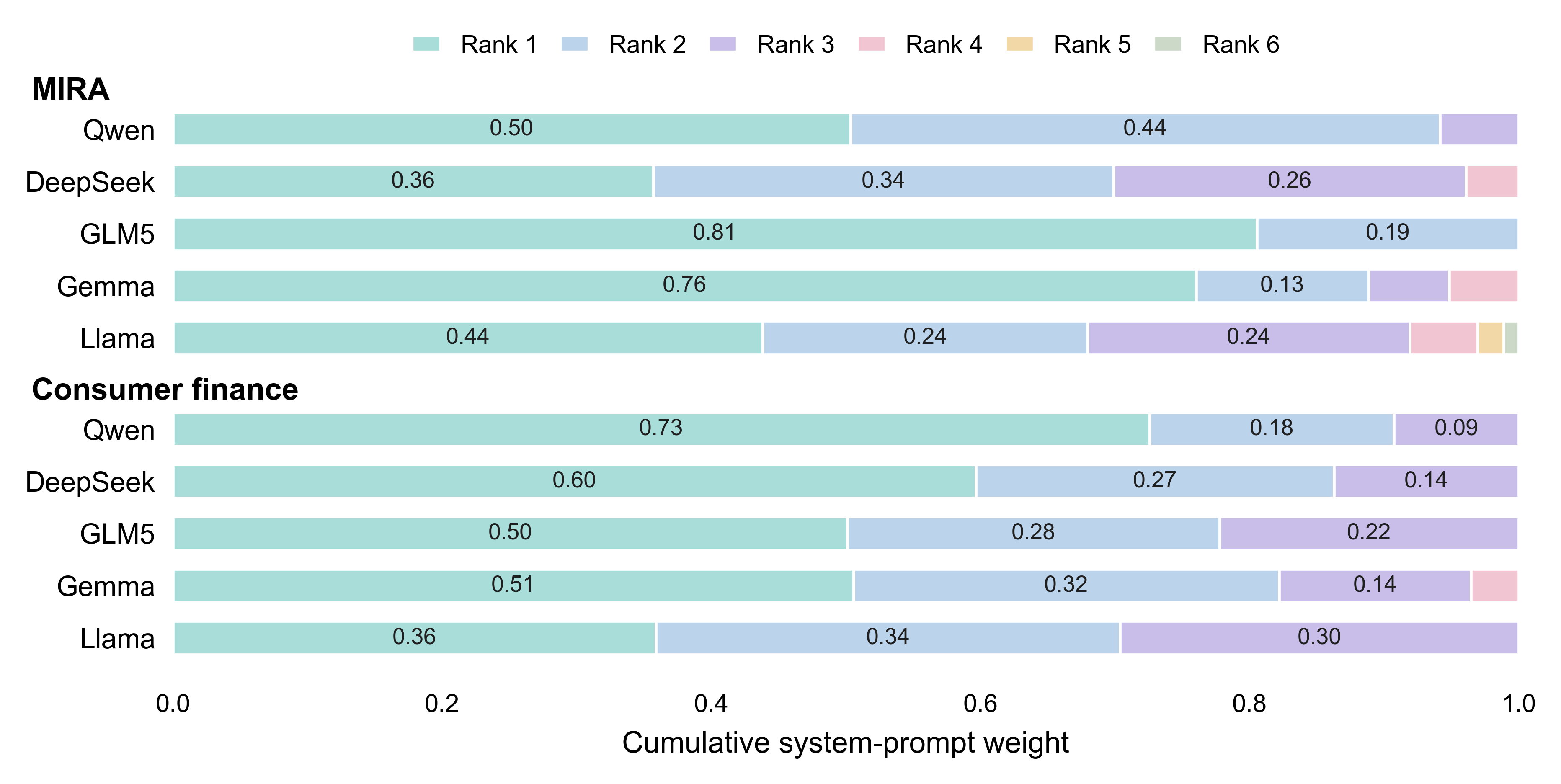}
    \caption{System-prompt weights learned by Constrained Mixed
    GroupDRO. }
    \label{fig:prompt-weight}
\end{figure}

Constrained Mixed GroupDRO uses only a few system prompts for each model: 2-6 in MIRA and 3-4 in consumer finance. The top 2 prompts together receive 0.68-1.00 of the weight in MIRA and 0.70-0.91 in consumer finance. MIRA GLM5 uses only two prompts, with one performing better on $m_1$ and $m_2$ and the other on $m_3$. MIRA Llama uses six prompts because actionability, completeness, and no-referral prompts help different metric-group pairs. Of these six prompts, the top three receive 91.9\% of the total weight.

The family-level analysis in Appendix~\ref{app:family} shows that completeness prompts receive the largest weights in MIRA, while actionability prompts receive the largest weights in consumer finance. For finance Gemma, removing the highest-weight prompt increases development worst case loss by 0.070, while removing the lowest-weight prompt increases it by only 0.002.

To illustrate how the selected prompts affect individual answers, we also present case studies using one held-out example from each domain in Appendix~\ref{app:case}, where the highest-weight Constrained Mixed prompt produces the best-scoring response in both cases.

\subsubsection{Candidate-Growth Analysis}
Our main experiments use a pool of 55 system prompts, but the number of available candidates may vary in practice. We therefore examine performance as the pool size increases from $K=5$ to $K=55$, following the candidate-growth protocol described in Section~\ref{sec:pro}. Figure~\ref{fig:candidate-growth-test} reports the test Overall Mean and Worst 25\% Mean for consumer finance and MIRA.

On test data, Constrained Mixed achieves the lowest Worst 25\% Mean across all pool sizes in both domains. As the pool size increases, its Worst 25\% Mean generally decreases, while its Overall Mean remains relatively stable. Scores often stabilize after about 25 or 35 system prompts, suggesting that moderate-size pools already provide most of the improvement. The corresponding development and worst-case results show similar patterns and are reported in Appendix~\ref{app:candidate-growth}.

\begin{figure}[t]
    \centering
    \includegraphics[width=\linewidth]{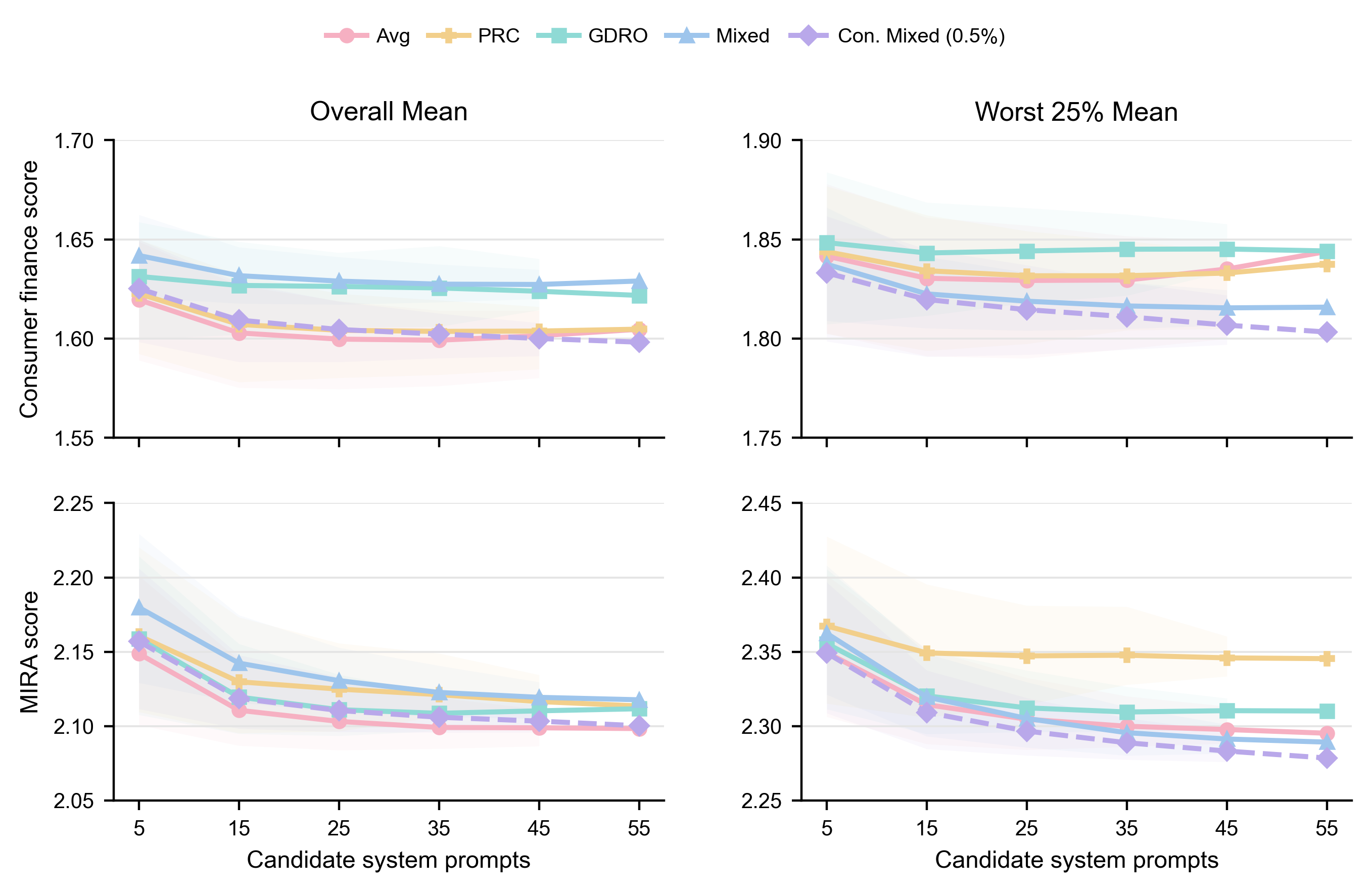}
    \caption{Test candidate-growth results for consumer finance and MIRA}
    \label{fig:candidate-growth-test}
\end{figure}

\section{Conclusion}
Equivalent questions can receive LLM answers of different quality depending on how they are phrased. We show that this problem can be mitigated at the system-prompt selection stage. Constrained Mixed GroupDRO assigns weights to system prompts in an existing pool to improve joint worst-case quality across evaluation metrics and groups while keeping average quality nearly unchanged. Our results show that different system prompts help different parts of the evaluation set and that moderate-size pools provide much of the full-pool benefit. Overall, the method offers a practical way to make LLM information-seeking systems more robust to differences in question phrasing.

\bibliography{aaai2027}

@article{vonneumann1928,
  author  = {v. Neumann, J.},
  title   = {Zur Theorie der Gesellschaftsspiele},
  journal = {Mathematische Annalen},
  year    = {1928},
  volume  = {100},
  number  = {1},
  pages   = {295--320},
  doi     = {10.1007/BF01448847},
  url     = {https://doi.org/10.1007/BF01448847}
}

@inproceedings{sessa2020mixed,
  title={Mixed strategies for robust optimization of unknown objectives},
  author={Sessa, Pier Giuseppe and Bogunovic, Ilija and Kamgarpour, Maryam and Krause, Andreas},
  booktitle={International Conference on Artificial Intelligence and Statistics},
  pages={2970--2980},
  year={2020},
  organization={PMLR}
}

@inproceedings{sagawa2020dro,
title={Distributionally Robust Neural Networks},
author={Shiori Sagawa and Pang Wei Koh and Tatsunori B. Hashimoto and Percy Liang},
booktitle={International Conference on Learning Representations},
year={2020},
url={https://openreview.net/forum?id=ryxGuJrFvS}
}

@inproceedings{diana2021minimax,
author = {Diana, Emily and Gill, Wesley and Kearns, Michael and Kenthapadi, Krishnaram and Roth, Aaron},
title = {Minimax Group Fairness: Algorithms and Experiments},
year = {2021},
isbn = {9781450384735},
publisher = {Association for Computing Machinery},
address = {New York, NY, USA},
url = {https://doi.org/10.1145/3461702.3462523},
doi = {10.1145/3461702.3462523},
booktitle = {Proceedings of the 2021 AAAI/ACM Conference on AI, Ethics, and Society},
pages = {66–76},
numpages = {11},
location = {Virtual Event, USA},
series = {AIES '21}
}

@misc{xu2026,
      title={MIRA: A Bilingual Benchmark for Medical Information Response Audit}, 
      author={Mengyu Xu and Qiaoxin Yang and Qianqian Wang and Xiwei Dai and Weiyi Wu and Chongyang Gao},
      year={2026},
      eprint={2605.28025},
      archivePrefix={arXiv},
      primaryClass={cs.AI},
      url={https://arxiv.org/abs/2605.28025}, 
}

@misc{hashimoto2018,
      title={Fairness Without Demographics in Repeated Loss Minimization}, 
      author={Tatsunori B. Hashimoto and Megha Srivastava and Hongseok Namkoong and Percy Liang},
      year={2018},
      eprint={1806.08010},
      archivePrefix={arXiv},
      primaryClass={stat.ML},
      url={https://arxiv.org/abs/1806.08010}, 
}

@inproceedings{zhou2023,
title={Large Language Models are Human-Level Prompt Engineers},
author={Yongchao Zhou and Andrei Ioan Muresanu and Ziwen Han and Keiran Paster and Silviu Pitis and Harris Chan and Jimmy Ba},
booktitle={The Eleventh International Conference on Learning Representations },
year={2023},
url={https://openreview.net/forum?id=92gvk82DE-}
}

@inproceedings{zollo2024,
title={Prompt Risk Control: A Rigorous Framework for Responsible Deployment of Large Language Models},
author={Thomas P Zollo and Todd Morrill and Zhun Deng and Jake Snell and Toniann Pitassi and Richard Zemel},
booktitle={The Twelfth International Conference on Learning Representations},
year={2024},
url={https://openreview.net/forum?id=5tGGWOijvq}
}

@article{hofmann2024,
  title={AI generates covertly racist decisions about people based on their dialect},
  author={Hofmann, Valentin and Kalluri, Pratyusha Ria and Jurafsky, Dan and King, Sharese},
  journal={Nature},
  volume={633},
  number={8028},
  pages={147--154},
  year={2024},
  doi={10.1038/s41586-024-07856-5},
  url={https://doi.org/10.1038/s41586-024-07856-5}
}

@article{berkman2011,
  title={Low health literacy and health outcomes: an updated systematic review},
  author={Berkman, N. D. and Sheridan, S. L. and Donahue, K. E. and Halpern, D. J. and Crotty, K},
  journal={Annals of Internal Medicine},
  volume={155},
  number={2},
  pages={97--107},
  year={2011},
  doi={10.7326/0003-4819-155-2-201107190-00005}
}

@inproceedings{gupta2024,
title={Bias Runs Deep: Implicit Reasoning Biases in Persona-Assigned {LLM}s},
author={Shashank Gupta and Vaishnavi Shrivastava and Ameet Deshpande and Ashwin Kalyan and Peter Clark and Ashish Sabharwal and Tushar Khot},
booktitle={The Twelfth International Conference on Learning Representations},
year={2024},
url={https://openreview.net/forum?id=kGteeZ18Ir}
}

@inproceedings{yang2024large,
  title={Large language models as optimizers},
  author={Yang, Chengrun and Wang, Xuezhi and Lu, Yifeng and Liu, Hanxiao and Le, Quoc V and Zhou, Denny and Chen, Xinyun},
  booktitle={International Conference on Learning Representations},
  year={2024}
}

@article{fernando2023promptbreeder,
  title={Promptbreeder: Self-referential self-improvement via prompt evolution},
  author={Fernando, Chrisantha and Banarse, Dylan and Michalewski, Henryk and Osindero, Simon and Rockt{\"a}schel, Tim},
  journal={arXiv preprint arXiv:2309.16797},
  year={2023}
}

@inproceedings{agrawal2026gepa,
title={{GEPA}: Reflective Prompt Evolution Can Outperform Reinforcement Learning},
author={Lakshya A Agrawal and Shangyin Tan and Dilara Soylu and Noah Ziems and Rishi Khare and Krista Opsahl-Ong and Arnav Singhvi and Herumb Shandilya and Michael J Ryan and Meng Jiang and Christopher Potts and Koushik Sen and Alex Dimakis and Ion Stoica and Dan Klein and Matei Zaharia and Omar Khattab},
booktitle={The Fourteenth International Conference on Learning Representations},
year={2026}
}

@inproceedings{li2021prefix,
    title = "Prefix-Tuning: Optimizing Continuous Prompts for Generation",
    author = "Li, Xiang Lisa  and
      Liang, Percy",
    editor = "Zong, Chengqing  and
      Xia, Fei  and
      Li, Wenjie  and
      Navigli, Roberto",
    booktitle = "Proceedings of the 59th Annual Meeting of the Association for Computational Linguistics and the 11th International Joint Conference on Natural Language Processing (Volume 1: Long Papers)",
    month = aug,
    year = "2021",
    address = "Online",
    publisher = "Association for Computational Linguistics",
    url = "https://aclanthology.org/2021.acl-long.353/",
    doi = "10.18653/v1/2021.acl-long.353",
    pages = "4582--4597"
}

@inproceedings{lester2021power,
  title={The Power of Scale for Parameter-Efficient Prompt Tuning},
  author={Lester, Brian and Al-Rfou, Rami and Constant, Noah},
  booktitle={Proceedings of the 2021 conference on empirical methods in natural language processing},
  pages={3045--3059},
  year={2021}
}

@misc{liu2023gpt,
      title={GPT Understands, Too}, 
      author={Xiao Liu and Yanan Zheng and Zhengxiao Du and Ming Ding and Yujie Qian and Zhilin Yang and Jie Tang},
      year={2023},
      eprint={2103.10385},
      archivePrefix={arXiv},
      primaryClass={cs.CL},
      url={https://arxiv.org/abs/2103.10385}, 
}

@inproceedings{shin2020autoprompt,
  title={Autoprompt: Eliciting knowledge from language models with automatically generated prompts},
  author={Shin, Taylor and Razeghi, Yasaman and Iv, Robert L Logan and Wallace, Eric and Singh, Sameer},
  booktitle={Proceedings of the 2020 conference on empirical methods in natural language processing (EMNLP)},
  pages={4222--4235},
  year={2020}
}

@article{wen2023hard,
  title={Hard prompts made easy: Gradient-based discrete optimization for prompt tuning and discovery},
  author={Wen, Yuxin and Jain, Neel and Kirchenbauer, John and Goldblum, Micah and Geiping, Jonas and Goldstein, Tom},
  journal={Advances in Neural Information Processing Systems},
  volume={36},
  pages={51008--51025},
  year={2023}
}

@inproceedings{sun2022black,
  title={Black-box tuning for language-model-as-a-service},
  author={Sun, Tianxiang and Shao, Yunfan and Qian, Hong and Huang, Xuanjing and Qiu, Xipeng},
  booktitle={International Conference on Machine Learning},
  pages={20841--20855},
  year={2022},
  organization={PMLR}
}

@inproceedings{deng2022rlprompt,
    title = "{RLP}rompt: Optimizing Discrete Text Prompts with Reinforcement Learning",
    author = "Deng, Mingkai  and
      Wang, Jianyu  and
      Hsieh, Cheng-Ping  and
      Wang, Yihan  and
      Guo, Han  and
      Shu, Tianmin  and
      Song, Meng  and
      Xing, Eric  and
      Hu, Zhiting",
    editor = "Goldberg, Yoav  and
      Kozareva, Zornitsa  and
      Zhang, Yue",
    booktitle = "Proceedings of the 2022 Conference on Empirical Methods in Natural Language Processing",
    month = dec,
    year = "2022",
    address = "Abu Dhabi, United Arab Emirates",
    publisher = "Association for Computational Linguistics",
    url = "https://aclanthology.org/2022.emnlp-main.222/",
    doi = "10.18653/v1/2022.emnlp-main.222",
    pages = "3369--3391"
}

@misc{prasad2023grips,
      title={GrIPS: Gradient-free, Edit-based Instruction Search for Prompting Large Language Models}, 
      author={Archiki Prasad and Peter Hase and Xiang Zhou and Mohit Bansal},
      year={2023},
      eprint={2203.07281},
      archivePrefix={arXiv},
      primaryClass={cs.CL},
      url={https://arxiv.org/abs/2203.07281}, 
}

@inproceedings{guo2024evo,
  title={Connecting Large Language Models with Evolutionary Algorithms Yields Powerful Prompt Optimizers},
  author={Guo, Qingyan and Wang, Rui and Guo, Junliang and Li, Bei and Song, Kaitao and Tan, Xu and Liu, Guoqing and Bian, Jiang and Yang, Yujiu},
  booktitle={International Conference on Learning Representations},
  volume={2024},
  pages={34133--34156},
  year={2024}
}

@inproceedings{opsahl-ong2024,
    title = "Optimizing Instructions and Demonstrations for Multi-Stage Language Model Programs",
    author = "Opsahl-Ong, Krista  and
      Ryan, Michael J  and
      Purtell, Josh  and
      Broman, David  and
      Potts, Christopher  and
      Zaharia, Matei  and
      Khattab, Omar",
    editor = "Al-Onaizan, Yaser  and
      Bansal, Mohit  and
      Chen, Yun-Nung",
    booktitle = "Proceedings of the 2024 Conference on Empirical Methods in Natural Language Processing",
    month = nov,
    year = "2024",
    address = "Miami, Florida, USA",
    publisher = "Association for Computational Linguistics",
    url = "https://aclanthology.org/2024.emnlp-main.525/",
    doi = "10.18653/v1/2024.emnlp-main.525",
    pages = "9340--9366"
}

@misc{yang_qwen3_2025,
  title={{Qwen3} Technical Report},
  author={Yang, An and Li, Anfeng and Yang, Baosong and Zhang, Beichen and Hui, Binyuan and Zheng, Bo and Yu, Bowen and Gao, Chang and Huang, Chengen and Lv, Chenxu and Zheng, Chujie and Liu, Dayiheng and Zhou, Fan and Huang, Fei and Hu, Feng and Ge, Hao and Wei, Haoran and Lin, Huan and Tang, Jialong and Yang, Jian and Tu, Jianhong and Zhang, Jianwei and Yang, Jianxin and Yang, Jiaxi and Zhou, Jing and Zhou, Jingren and Lin, Junyang and Dang, Kai and Bao, Keqin and Yang, Kexin and Yu, Le and Deng, Lianghao and Li, Mei and Xue, Mingfeng and Li, Mingze and Zhang, Pei and Wang, Peng and Zhu, Qin and Men, Rui and Gao, Ruize and Liu, Shixuan and Luo, Shuang and Li, Tianhao and Tang, Tianyi and Yin, Wenbiao and Ren, Xingzhang and Wang, Xinyu and Zhang, Xinyu and Ren, Xuancheng and Fan, Yang and Su, Yang and Zhang, Yichang and Zhang, Yinger and Wan, Yu and Liu, Yuqiong and Wang, Zekun and Cui, Zeyu and Zhang, Zhenru and Zhou, Zhipeng and Qiu, Zihan},
  year={2025},
  eprint={2505.09388},
  archivePrefix={arXiv},
  primaryClass={cs.CL},
  doi={10.48550/arXiv.2505.09388},
  url={https://arxiv.org/abs/2505.09388}
}

@techreport{deepseek2026deepseekv4,
  title = {{DeepSeek-V4}: Towards Highly Efficient Million-Token Context Intelligence},
  author = {{DeepSeek AI}},
  institution = {DeepSeek},
  year = {2026},
  month = apr,
  url = {https://huggingface.co/deepseek-ai/DeepSeek-V4-Pro},
  note = {Technical report. Accessed: 2026-05-20}
}

@misc{openai2026gpt54,
  author = {{OpenAI}},
  title = {Introducing {GPT-5.4}},
  year = {2026},
  month = mar,
  howpublished = {\url{https://openai.com/index/introducing-gpt-5-4/}}
}

@misc{zheng_judging_2023,
	title = {Judging {LLM}-as-a-{Judge} with {MT}-{Bench} and {Chatbot} {Arena}},
	url = {http://arxiv.org/abs/2306.05685},
	doi = {10.48550/arXiv.2306.05685},
	urldate = {2026-05-19},
	publisher = {arXiv},
	author = {Zheng, Lianmin and Chiang, Wei-Lin and Sheng, Ying and Zhuang, Siyuan and Wu, Zhanghao and Zhuang, Yonghao and Lin, Zi and Li, Zhuohan and Li, Dacheng and Xing, Eric P. and Zhang, Hao and Gonzalez, Joseph E. and Stoica, Ion},
	month = dec,
	year = {2023},
	note = {arXiv:2306.05685 [cs.CL]},
}

@misc{glm5team2026,
      title={GLM-5: from Vibe Coding to Agentic Engineering}, 
      author={GLM-5-Team},
      year={2026},
      eprint={2602.15763},
      archivePrefix={arXiv},
      primaryClass={cs.LG},
      url={https://arxiv.org/abs/2602.15763}, 
}

@misc{gemmateam2026,
      title={Gemma 4 Technical Report}, 
      author={Gemma Team and Sherif El Abd and Vaibhav Aggarwal and Robin Algayres and Alek Andreev and Olivier Bachem and Ian Ballantyne and Cormac Brick and Victor Cărbune and Michelle Casbon and Mayank Chaturvedi and Victor Cotruta and Alice Coucke and Phil Culliton and Robert Dadashi and Lucas Dixon and Mohamed Elhawaty and Utku Evci and Clément Farabet and Johan Ferret and Filippo Galgani and Sertan Girgin and Jean-Bastien Grill and Maarten Grootendorst and Jiaxian Guo and Cassidy Hardin and Yanzhang He and Steven M. Hernandez and Omri Homburger and Léonard Hussenot and Juyeong Ji and Armand Joulin and Aishwarya Kamath and Parnian Kassraie and Olivier Lacombe and Preethi Lahoti and Gaël Liu and Gus Martins and Luciano Martins and Tatiana Matejovicova and Ramona Merhej and Nikola Momchev and Sneha Mondal and Ryan Mullins and Sindhu Raghuram Panyam and Shreya Pathak and Sarah Perrin and André Susano Pinto and Etienne Pot and Angéline Pouget and Alexandre Ramé and Sabela Ramos and Douglas Reid and David Rim and Morgane Rivière and Karsten Roth and Louis Rouillard and Omar Sanseviero and Pier Giuseppe Sessa and Shane Settle and Danila Sinopalnikov and Sara Smoot and Piotr Stanczyk and Andreas Steiner and Lawrence Stewart and Ilya Tolstikhin and Michael Tschannen and Anton Tsitsulin and Nino Vieillard and Renjie Wu and Pingmei Xu and Haichuan Yang and Edouard Yvinec and Li Zhang and Joe Zou and Nicolas Aagnes and Abdelrahman Abdelhamed and Shivani Agrawal and Shubham Agrawal and Ibrahim Alabdulmohsin and Jean Baptiste Alayrac and Uri Alon and Chandramouli Amarnath and Ankesh Anand and Chrysovalantis Anastasiou and Setareh Ariafar and François-Xavier Aubet and Kyriakos Axiotis and Federico Barbero and Joelle Barral and Alexei Bendebury and Urs Bergmann and Stanley Bileschi and Kat Black and Mathieu Blondel and Sebastian Borgeaud and Arthur Bražinskas and Ryan Burnell and Robert Busa-Fekete and Mu Cai and Glenn Cameron and Charlotte Caucheteux and Garima Chadha and Jetha Chan and Aditya Chawla and Blake Jianhang Chen and Jesse Chen and Lin Chen and Xu Chen and Derek Cheng and Tzu-hsiang Chien and Nikolai Chinaev and Yi Chou and Zhaohui Chu and Benjamin Coleman and Pooja Consul and Sam Conway-Rahman and Scott Crowell and Dylan Cutler and Vivek Dani and Samira Daruki and Anil Das and Daniel Deutsch and Nishanth Dikkala and Li Ding and Qiuhan Ding and Shenil Dodhia and Konstantin Donhauser and Tulsee Doshi and Anca Dragan and Alex Druinsky and Sahil Dua and Zoltan Egyed and Danielle Eisenbud and Daniel Eppens and Cindy Fan and Bahare Fatemi and Yassir Fathullah and Vlad Feinberg and Milen Ferev and Takumi Fujimoto and Isaac Galatzer-Levy and João Gante and Simon Geisler and Soham Ghosal and Antonious M. Girgis and Alec Go and Alhaad Gokhale and Alex Grills and Yiming Gu and Pramod Gupta and Guru Guruganesh and Raia Hadsell and Hamza Harkous and Jitendra Harlalka and Demis Hassabis and Anja Hauth and Joe Heyward and Arian Hosseini and Chih-Yang Hsia and I-Hung Hsu and Xiaopeng Huang and Yangsibo Huang and Kevin Hui and Adrian Hutter and Te I and Fotis Iliopoulos and Advait Jain and Ganesh Jawahar and Ziwei Ji and Qilin Jin and Melvin Johnson and Kandarp Joshi and Arun Kandoor and Wang-Cheng Kang and Koray Kavukcuoglu and Mehran Kazemi and Kathleen Kenealy and Amr Khalifa and Phoebe Kirk and Suraj Kothawade and Vitaly Kovalev and Neel Kovelamudi and Adam Kraft and Ravin Kumar and Harish Kuppam and Justin Lannin and Chen-Yu Lee and Seungji Lee and Dmitry Lepikhin and Dongdong Li and Qiujia Li and Valentin Liévin and Ethan Lin and Ziqian Lin and Casper Liu and Tianlin Liu and Tianqi Liu and Xin Liu and Mayank Lunayach and Min Ma and Gagan Madan and Andrii Maksai and Eric Malmi and Michal Matuszak and Daniel McDuff and Gaurav Menghani and Daniil Mirylenka and Karolis Misiunas and Vedant Misra and Andreea Mitran and Kareem Mohamed and Maksim Mukha and Eric Noland and James O'Donnell and Kate Olszewska and Bernett Orlando and Wanqiong Pan and Rina Panigrahy and Unnati Parekh and Chunjong Park and Eric Paskie and Liqian Peng and Bryce Petrini and Slav Petrov and Jonas Pfeiffer and Bilal Piot and Martyna Plomecka and Siim Poder and Octavio Ponce and Arijit Pramanik and David Racz and Anish Rajan and Michelle Ramanovich and Anand Rao and Marvin Ritter and Vitor Rodrigues and Evan Rosen and Mikołaj Rybiński and Noveen Sachdeva and Michaël E. Sander and Rohit Sathyanarayana and Sagar Savla and Samuel Schmidgall and Tal Schuster and Benoit Seguin and Andrew Sellergren and Aliaksei Severyn and Izhak Shafran and Dhruv Shah and Yuan Shangguan and Ashish Shenoy and Pradeep Shenoy and Rakesh Shivanna and Pauline Sho and Lucas Spangher and Wojciech Stokowiec and Tim Strother and Yao Su and Yinghao Sun and Mukund Sundararajan and Andrea Tacchetti and Mor Hazan Taege and Pouya Tafti and Chetan Tekur and Rahul Thapa and Madeleine Traverse and Lenart Treven and Tao Tu and Chien Te Tung and Petar Veličković and Malini Pooni Venkat and Sagar Gubbi Venkatesh and Vidya Venkiteswaran and Francesco Visin and Alex Vitvitskyi and Kiran Vodrahalli and Weiyi Wang and Xin Wang and Tris Warkentin and Jan Wassenberg and John Wieting and Lechao Xiao and Hao Xu and Yuhui Xu and Fuzhao Xue and Arun Yadav and Jun Yan and Antoine Yang and Lin Yang and Ming-Hsuan Yang and Ziyu Ying and Jae Hyeon Yoo and Sajjad Zafar and Fred Zhang and Jiageng Zhang and Jianyi Zhang and Xiaofan Zhang and Chao Zhao and David Zhou and Chen Zou},
      year={2026},
      eprint={2607.02770},
      archivePrefix={arXiv},
      primaryClass={cs.CL},
      url={https://arxiv.org/abs/2607.02770}, 
}

@misc{llama4,
  author = {{Meta AI}},
  title  = {The {Llama 4} Herd: The Beginning of a New Era of Natively Multimodal {AI} Innovation},
  year   = {2025},
  url    = {https://ai.meta.com/blog/llama-4-multimodal-intelligence/},
  note   = {Accessed: 2026-07-15}
}

@misc{gpt54mini,
  author       = {{OpenAI}},
  title        = {Introducing {GPT-5.4} mini and nano},
  year         = {2026},
  month        = mar,
  howpublished = {\url{https://openai.com/index/introducing-gpt-5-4-mini-and-nano/}}
}

@inproceedings{do2025automatic,
  title={Automatic prompt selection for large language models},
  author={Do, Viet-Tung and Nguyen, Xuan-Quang and Hoang, Van-Khanh and Nguyen, Duy-Hung and Sabahi, Shahab and Yang, Jeff and Hotta, Hajime and Nguyen, Minh-Tien and Le, Hung},
  booktitle={Pacific-Asia Conference on Knowledge Discovery and Data Mining},
  pages={91--102},
  year={2025},
  organization={Springer}
}

\clearpage
\appendix
\section{Extended Related Work}
\label{app:relatedwork}
Prompt optimization shows that prompt design can substantially affect LLM output quality. White-box methods use parameter or gradient access to optimize soft prompts~\citep{li2021prefix,lester2021power,liu2023gpt} or search discrete tokens~\citep{shin2020autoprompt,wen2023hard}. Black-box methods search with model feedback only, via derivative-free optimization~\citep{sun2022black}, reinforcement learning~\citep{deng2022rlprompt}, or edit-based search~\citep{prasad2023grips}, and recent LLM-as-optimizer and evolutionary methods generate prompt candidates automatically~\citep{zhou2023,yang2024large,fernando2023promptbreeder,guo2024evo,agrawal2026gepa,opsahl-ong2024}. A complementary line selects prompts from a candidate set rather than generating new text, ranking candidates by validation performance or learned evaluators~\citep{do2025automatic}.

\section{Proof of Proposition~\ref{prop:collapse}}
\label{app:proof}
\begin{proof}
Each feasible single system prompt $p_i\in\mathcal P_\epsilon$ corresponds to the one-hot vector $e_i\in\Delta_N$, which places all weight on $p_i$. By the definition of $\mathcal P_\epsilon$,
\[
\bar L(e_i) = \frac{1}{|\mathcal M||\mathcal G|}\sum_{m\in\mathcal M}\sum_{g\in\mathcal G}R_{m,g}(e_i) \le (1+\epsilon)\bar L_{\mathrm{avg}}.
\]
Therefore, $e_i\in\Delta_{N,\epsilon}$. Thus, the one-hot vectors corresponding to feasible single-system-prompt choices form a subset of the constrained mixed feasible set, and minimizing over the larger set gives $V_{\mathrm{mix},\epsilon}\le
V_{\mathrm{pure},\epsilon}$.

Both feasible sets are nonempty. Because the losses are
nonnegative and $\epsilon\ge0$, 
\[
\bar L_{\mathrm{avg}} \le (1+\epsilon)\bar L_{\mathrm{avg}}.
\]
So,
$p_{\mathrm{avg}}\in\mathcal P_\epsilon$ and its corresponding one-hot vector belongs to $\Delta_{N,\epsilon}$.

Because $\mathcal P_\epsilon$ is finite and nonempty, a constrained single-prompt minimizer exists. The set $\Delta_{N,\epsilon}$ is a closed subset of the compact simplex $\Delta_N$ and is therefore compact. Since 
\[
w\mapsto \max_{\substack{m\in\mathcal M\\g\in\mathcal G}}R_{m,g}(w)
\]
is continuous, an optimal constrained mixed solution also exists.

First suppose
$V_{\mathrm{mix},\epsilon}=V_{\mathrm{pure},\epsilon}$. Let
\[
p_{i^\star}\in \argmin_{p_i\in\mathcal P_\epsilon}\max_{\substack{m\in\mathcal M\\g\in\mathcal G}}R_{m,g}(e_i)
\]
be a constrained single-system-prompt minimizer. Then $e_{i^\star}\in\Delta_{N,\epsilon}$, and its objective value is
\[
\max_{\substack{m\in\mathcal M\\g\in\mathcal G}}R_{m,g}(e_{i^\star}) = V_{\mathrm{pure},\epsilon} = V_{\mathrm{mix},\epsilon}.
\]
Thus, $e_{i^\star}$ is an optimal constrained mixed solution that places all weight on a single prompt in $\mathcal P_\epsilon$.

Conversely, suppose an optimal constrained mixed solution is one-hot, say $e_i$. Its feasibility implies $p_i\in\mathcal P_\epsilon$, since $\bar L(e_i)$ equals the mean loss of prompt $p_i$. Therefore,
\[
V_{\mathrm{pure},\epsilon} \le \max_{\substack{m\in \mathcal M\\ g\in\mathcal G}}R_{m,g}(e_i) = V_{\mathrm{mix},\epsilon}.
\]
Together with $V_{\mathrm{mix},\epsilon}\le V_{\mathrm{pure},\epsilon}$, this gives $V_{\mathrm{mix},\epsilon} = V_{\mathrm{pure},\epsilon}$.

Finally, suppose $V_{\mathrm{mix},\epsilon} < V_{\mathrm{pure},\epsilon}$. Every feasible one-hot vector $e_i$ corresponds to some $p_i\in\mathcal P_\epsilon$ and therefore has objective value 
\[
\max_{\substack{m\in\mathcal M\\g\in\mathcal G}}R_{m,g}(e_i) \ge V_{\mathrm{pure},\epsilon} > V_{\mathrm{mix},\epsilon}.
\]

Hence, no feasible one-hot vector can be optimal. Since an optimal constrained mixed solution exists and cannot be one-hot, every optimal constrained mixed solution places positive weight on at least two system prompts.
\end{proof}

\section{Consumer-Finance Scoring Agreement}
\label{app:finance-agreement}
Table~\ref{tab:finance-agreement} reports agreement between the trained finance annotator and the LLM judge. Exact agreement is the proportion of identical ratings, adjacent agreement is the proportion that differ by at most one point, and QWK denotes quadratic weighted kappa.

\begin{table}[t]
\centering
\begin{tabular}{lrrrr}
\toprule
\textbf{Metric} & $n$ & \textbf{Exact}
& \textbf{Adjacent} & \textbf{QWK} \\
\midrule
$m_1$ & 250 & 0.804 & 0.964 & 0.832 \\
$m_2$ & 250 & 0.904 & 0.988 & 0.923 \\
$m_3$ & 250 & 0.924 & 0.996 & 0.918 \\
\bottomrule
\end{tabular}
\caption{Consumer-finance scoring agreement on the
250-response audit set.}
\label{tab:finance-agreement}
\end{table}

\section{Full Group Validity Results}
This section provides the complete results supporting the group-validity analysis in Section~\ref{sec:validity}. We first report the baseline worst-minus-mean gaps and then present the full fixed-effect regression results for each domain.

\subsection{Baseline Worst-Minus-Mean Gaps}
\label{app:gaps}
Table~\ref{tab:disparity} reports the baseline worst-minus-mean gap for each model, domain, and evaluation metric. Each gap is calculated by subtracting the mean loss across the 24 evaluation groups from the highest group loss. Larger gaps indicate greater variation in response quality across groups. 

For example, in consumer finance, Llama4Scout has gaps of 0.607 on $m_1$ and 0.624 on $m_2$. In MIRA, DeepSeek-V4Pro has a gap of 0.280 on $m_2$, and Llama4Scout has a gap of 0.284 on $m_3$. The gaps are positive across all model-domain-metric settings, showing that the worst-performing group consistently has higher loss than the group mean. This provides additional motivation for optimizing worst-group performance rather than relying only on average quality.

\begin{table}[t]
\centering
{\small
\setlength{\tabcolsep}{1.2mm}
\begin{tabular}{@{}lccc@{}}
\toprule
Model & $m_1$ gap & $m_2$ gap & $m_3$ gap \\
\midrule
\multicolumn{4}{@{}l}{\textit{Consumer finance}} \\
Qwen36Plus     & 0.292 & 0.386 & 0.158 \\
DeepSeek-V4Pro & 0.560 & 0.466 & 0.441 \\
GLM5           & 0.390 & 0.316 & 0.264 \\
Gemma4-31B     & 0.362 & 0.326 & 0.237 \\
Llama4Scout    & 0.607 & 0.624 & 0.275 \\
\midrule
\multicolumn{4}{@{}l}{\textit{MIRA}} \\
Qwen36Plus     & 0.218 & 0.219 & 0.274 \\
DeepSeek-V4Pro & 0.139 & 0.280 & 0.155 \\
GLM5           & 0.198 & 0.145 & 0.146 \\
Gemma4-31B     & 0.234 & 0.176 & 0.133 \\
Llama4Scout    & 0.252 & 0.273 & 0.284 \\
\bottomrule
\end{tabular}
}
\caption{Baseline worst-minus-mean gaps by model, domain, and
evaluation metric.}
\label{tab:disparity}
\end{table}

\subsection{Controlled Group Differences}
\label{app:group-validity}
Tables~\ref{tab:appendix-mira-regression} and \ref{tab:appendix-finance-regression} report the full regression results for MIRA and consumer finance. The \textit{Pooled} rows pool all five models, while the remaining rows report model-specific results. The regressions control for seed question and system prompt, and the pooled regressions also control for the response-generating model. Standard errors are clustered by seed question. Positive coefficients indicate worse outcomes relative to the reference group.

Together, these results show that the evaluation groups capture meaningful variation in response quality. Literacy signals produce the strongest and most consistent differences, while language, question structure, resource constraints, and question framing show domain-specific effects. This supports the use of these groups for joint worst-case system-prompt selection.

\begin{table*}[t]
\centering
{\small
\setlength{\tabcolsep}{1mm}
\begin{tabular}{@{}llccc@{}}
\toprule
\textbf{Model} & \textbf{Predictor}
& $\mathbf{m_1}$ & $\mathbf{m_2}$ & $\mathbf{m_3}$ \\
\midrule
Pooled & Chinese (vs. English) & $-0.096^{*}$ (0.040) & $-0.055$ (0.046) & $-0.049$ (0.038) \\
 & Low HLS (vs. High HLS) & $0.118^{***}$ (0.032) & $0.114^{***}$ (0.031) & $0.026$ (0.033) \\
 & Colloquial (vs. Formal) & $0.022$ (0.016) & $0.010$ (0.014) & $-0.007$ (0.018) \\
 & Skeleton S2 (vs. S1) & $0.011$ (0.007) & $0.001$ (0.005) & $0.006$ (0.006) \\
 & Skeleton S3 (vs. S1) & $0.028^{*}$ (0.011) & $0.014$ (0.011) & $0.012$ (0.010) \\
\midrule
Qwen36Plus & Chinese (vs. English) & $-0.149^{***}$ (0.043) & $-0.094^{*}$ (0.048) & $-0.090^{*}$ (0.036) \\
 & Low HLS (vs. High HLS) & $0.082^{*}$ (0.034) & $0.080^{*}$ (0.031) & $0.016$ (0.029) \\
 & Colloquial (vs. Formal) & $-0.006$ (0.020) & $-0.013$ (0.017) & $-0.018$ (0.021) \\
 & Skeleton S2 (vs. S1) & $0.010$ (0.011) & $0.001$ (0.006) & $0.007$ (0.007) \\
 & Skeleton S3 (vs. S1) & $-0.003$ (0.013) & $-0.022^{\dagger}$ (0.012) & $-0.012$ (0.011) \\
\midrule
DeepSeek-V4Pro & Chinese (vs. English) & $-0.080^{\dagger}$ (0.046) & $-0.055$ (0.052) & $-0.115^{*}$ (0.048) \\
 & Low HLS (vs. High HLS) & $0.116^{**}$ (0.038) & $0.088^{*}$ (0.034) & $0.005$ (0.041) \\
 & Colloquial (vs. Formal) & $0.023$ (0.018) & $0.010$ (0.018) & $-0.003$ (0.022) \\
 & Skeleton S2 (vs. S1) & $0.021^{**}$ (0.008) & $0.011$ (0.008) & $0.018^{\dagger}$ (0.010) \\
 & Skeleton S3 (vs. S1) & $0.033^{*}$ (0.016) & $0.007$ (0.011) & $0.001$ (0.011) \\
\midrule
GLM5 & Chinese (vs. English) & $-0.187^{***}$ (0.055) & $-0.150^{*}$ (0.060) & $-0.110^{*}$ (0.051) \\
 & Low HLS (vs. High HLS) & $0.149^{***}$ (0.039) & $0.147^{***}$ (0.036) & $0.061$ (0.039) \\
 & Colloquial (vs. Formal) & $0.028$ (0.022) & $0.013$ (0.018) & $-0.010$ (0.022) \\
 & Skeleton S2 (vs. S1) & $0.015^{\dagger}$ (0.009) & $0.006$ (0.008) & $0.000$ (0.009) \\
 & Skeleton S3 (vs. S1) & $0.042^{**}$ (0.015) & $0.025^{\dagger}$ (0.014) & $0.008$ (0.013) \\
\midrule
Gemma4-31B & Chinese (vs. English) & $-0.112^{**}$ (0.043) & $-0.095^{\dagger}$ (0.052) & $-0.137^{**}$ (0.047) \\
 & Low HLS (vs. High HLS) & $0.128^{***}$ (0.035) & $0.120^{***}$ (0.034) & $0.018$ (0.035) \\
 & Colloquial (vs. Formal) & $0.039$ (0.024) & $0.021$ (0.018) & $-0.020$ (0.021) \\
 & Skeleton S2 (vs. S1) & $0.007$ (0.010) & $-0.011$ (0.010) & $-0.002$ (0.011) \\
 & Skeleton S3 (vs. S1) & $0.034^{*}$ (0.013) & $0.025^{\dagger}$ (0.015) & $0.020$ (0.013) \\
\midrule
Llama4Scout & Chinese (vs. English) & $0.048$ (0.039) & $0.118^{*}$ (0.048) & $0.210^{***}$ (0.044) \\
 & Low HLS (vs. High HLS) & $0.117^{**}$ (0.037) & $0.135^{***}$ (0.038) & $0.033$ (0.036) \\
 & Colloquial (vs. Formal) & $0.025^{\dagger}$ (0.014) & $0.020$ (0.016) & $0.014$ (0.017) \\
 & Skeleton S2 (vs. S1) & $0.001$ (0.008) & $-0.001$ (0.009) & $0.005$ (0.010) \\
 & Skeleton S3 (vs. S1) & $0.035^{***}$ (0.009) & $0.034^{*}$ (0.015) & $0.042^{**}$ (0.015) \\
\midrule
\multicolumn{5}{@{}l@{}}{\textit{Note.} Positive coefficients indicate worse outcomes.}\\
\multicolumn{5}{@{}l@{}}{Standard errors clustered by seed question are shown in parentheses.}\\
\multicolumn{5}{@{}l@{}}{$^{\dagger}p<.10$, $^{*}p<.05$, $^{**}p<.01$, and $^{***}p<.001$.}\\
\bottomrule
\end{tabular}
}
\caption{Full controlled score differences for MIRA evaluation groups. Reference groups are shown in parentheses. Regressions include seed-question and system-prompt fixed effects; the pooled regressions also include model fixed effects.}
\label{tab:appendix-mira-regression}
\end{table*}

\begin{table*}[t]
\centering
{\small
\setlength{\tabcolsep}{1mm}
\begin{tabular}{@{}llccc@{}}
\toprule
\textbf{Model} & \textbf{Predictor}
& $\mathbf{m_1}$ & $\mathbf{m_2}$ & $\mathbf{m_3}$ \\
\midrule
Pooled & Chinese (vs. English) & $0.006$ (0.143) & $0.025$ (0.132) & $0.003$ (0.082) \\
 & Low FLS (vs. High FLS) & $0.094^{***}$ (0.026) & $0.094^{***}$ (0.022) & $0.060^{***}$ (0.018) \\
 & Low asset (vs. High asset) & $0.017^{*}$ (0.008) & $0.019^{**}$ (0.007) & $-0.010$ (0.007) \\
 & Confusion frame (vs. Direct) & $0.045^{***}$ (0.011) & $0.045^{***}$ (0.010) & $0.041^{***}$ (0.006) \\
 & Misconception frame (vs. Direct) & $0.001$ (0.039) & $-0.028$ (0.031) & $-0.024$ (0.021) \\
\midrule
Qwen36Plus & Chinese (vs. English) & $-0.112$ (0.135) & $-0.108$ (0.124) & $-0.056$ (0.067) \\
 & Low FLS (vs. High FLS) & $0.009$ (0.025) & $0.024$ (0.017) & $0.022$ (0.016) \\
 & Low asset (vs. High asset) & $0.006$ (0.010) & $0.016^{\dagger}$ (0.010) & $-0.004$ (0.009) \\
 & Confusion frame (vs. Direct) & $0.018^{**}$ (0.007) & $0.025^{***}$ (0.006) & $0.016^{**}$ (0.006) \\
 & Misconception frame (vs. Direct) & $-0.022$ (0.035) & $-0.040$ (0.035) & $-0.033^{\dagger}$ (0.019) \\
\midrule
DeepSeek-V4Pro & Chinese (vs. English) & $-0.092$ (0.143) & $-0.061$ (0.132) & $-0.032$ (0.087) \\
 & Low FLS (vs. High FLS) & $0.078^{*}$ (0.034) & $0.085^{**}$ (0.026) & $0.056^{*}$ (0.025) \\
 & Low asset (vs. High asset) & $0.013$ (0.013) & $0.011$ (0.011) & $-0.027^{*}$ (0.012) \\
 & Confusion frame (vs. Direct) & $0.032^{*}$ (0.016) & $0.040^{**}$ (0.012) & $0.036^{***}$ (0.007) \\
 & Misconception frame (vs. Direct) & $0.029$ (0.048) & $0.000$ (0.033) & $0.007$ (0.028) \\
\midrule
GLM5 & Chinese (vs. English) & $-0.065$ (0.146) & $-0.028$ (0.128) & $-0.003$ (0.083) \\
 & Low FLS (vs. High FLS) & $0.055$ (0.036) & $0.054^{\dagger}$ (0.032) & $0.035$ (0.026) \\
 & Low asset (vs. High asset) & $0.030^{*}$ (0.013) & $0.038^{**}$ (0.012) & $-0.001$ (0.010) \\
 & Confusion frame (vs. Direct) & $0.053^{**}$ (0.017) & $0.051^{***}$ (0.011) & $0.056^{***}$ (0.010) \\
 & Misconception frame (vs. Direct) & $0.027$ (0.062) & $-0.005$ (0.047) & $-0.012$ (0.033) \\
\midrule
Gemma4-31B & Chinese (vs. English) & $0.052$ (0.160) & $0.073$ (0.144) & $0.043$ (0.091) \\
 & Low FLS (vs. High FLS) & $0.092^{\dagger}$ (0.049) & $0.100^{*}$ (0.042) & $0.041$ (0.030) \\
 & Low asset (vs. High asset) & $0.003$ (0.018) & $-0.001$ (0.015) & $-0.022^{*}$ (0.011) \\
 & Confusion frame (vs. Direct) & $0.061^{**}$ (0.022) & $0.051^{**}$ (0.018) & $0.051^{***}$ (0.010) \\
 & Misconception frame (vs. Direct) & $-0.014$ (0.060) & $-0.042$ (0.046) & $-0.047$ (0.030) \\
\midrule
Llama4Scout & Chinese (vs. English) & $0.248$ (0.176) & $0.251$ (0.168) & $0.064$ (0.096) \\
 & Low FLS (vs. High FLS) & $0.235^{***}$ (0.064) & $0.206^{***}$ (0.059) & $0.146^{***}$ (0.024) \\
 & Low asset (vs. High asset) & $0.033^{*}$ (0.016) & $0.031^{\dagger}$ (0.016) & $0.006$ (0.010) \\
 & Confusion frame (vs. Direct) & $0.059^{**}$ (0.022) & $0.056^{**}$ (0.021) & $0.044^{***}$ (0.011) \\
 & Misconception frame (vs. Direct) & $-0.014$ (0.034) & $-0.056^{\dagger}$ (0.031) & $-0.034$ (0.022) \\
\midrule
\multicolumn{5}{@{}l@{}}{\textit{Note.} Positive coefficients indicate worse outcomes.}\\
\multicolumn{5}{@{}l@{}}{Standard errors clustered by seed question are shown in parentheses.}\\
\multicolumn{5}{@{}l@{}}{$^{\dagger}p<.10$, $^{*}p<.05$, $^{**}p<.01$, and $^{***}p<.001$.}\\
\bottomrule
\end{tabular}
}
\caption{Full controlled score differences for consumer-finance evaluation groups. Reference groups are shown in parentheses. Regressions include seed-question and system-prompt fixed effects. The pooled regressions also include model fixed effects.}
\label{tab:appendix-finance-regression}
\end{table*}

\subsection{Literacy Placebo Tests}
\label{app:placebo}
As a matched-pair placebo check, we randomly flip the low/high literacy label within matched pairs 1000 times. In MIRA, the observed low-high HLS deltas are 0.118/0.114/0.026 on $m_1/m_2/m_3$, compared with placebo 95th percentiles of 0.005/0.005/0.005. In consumer finance, the observed low-high FLS deltas are 0.094/0.094/0.060, compared with placebo 95th percentiles of 0.007/0.006/0.004. In both domains, the observed deltas exceed all 1000 permutations ($p_{\mathrm{perm}}=1/1001<0.001$), showing that the literacy effects are not reproduced by random relabeling.

\section{Full Development-Set Results}

\subsection{Development Results of Selectors}
\label{app:dev-results}
Tables~\ref{tab:miradev} and~\ref{tab:findev} report the complete development-set results for MIRA and consumer finance. Each table reports the Overall Mean (All), Worst 25\% Mean (25\%), and worst-case score (Worst). Lower values are better.

Constrained Mixed GroupDRO achieves a lower Worst 25\% Mean than Average selection, PRC, and Pure GroupDRO in all 10 model--domain settings. It also achieves a lower worst-case score than all three single-system-prompt selectors in 9 of the 10 settings. The exception is MIRA GLM5, where its score is 2.422 compared with 2.400 for Pure GroupDRO. Its Overall Mean remains within 0.013 of the best single-system-prompt result in every setting. Bold indicates the best result among the single-system-prompt selectors and Constrained Mixed GroupDRO, while underlining indicates that Constrained Mixed GroupDRO outperforms unconstrained Mixed GroupDRO.

\begin{table}[t]
\centering
{\small
\setlength{\tabcolsep}{1mm}
\begin{tabular}{@{}llccccc@{}}
\toprule
Model & Stat. & Avg & PRC & GDRO & Mix. & Con. Mix. \\
\midrule
Qwen
& All   & \textbf{1.860} & 1.912 & 1.878 & 1.889 & \underline{1.870} \\
& 25\%  & 2.061 & 2.114 & 2.056 & 2.029 & \textbf{2.030} \\
& Worst & 2.250 & 2.250 & 2.150 & 2.094 & \textbf{2.107} \\
\midrule

DeepSeek
& All   & \textbf{1.898} & 1.902 & \textbf{1.898} & 1.914 & \underline{1.907} \\
& 25\%  & 2.131 & 2.197 & 2.131 & 2.035 & \textbf{\underline{2.034}} \\
& Worst & 2.250 & 2.300 & 2.250 & 2.062 & \textbf{2.071} \\
\midrule

GLM5
& All   & \textbf{1.999} & 2.085 & 2.051 & 2.048 & \underline{2.009} \\
& 25\%  & 2.256 & 2.331 & 2.281 & 2.225 & \textbf{2.232} \\
& Worst & 2.500 & 2.450 & \textbf{2.400} & 2.292 & 2.422 \\
\midrule

Gemma
& All   & \textbf{2.235} & 2.286 & \textbf{2.235} & 2.319 & \underline{2.247} \\
& 25\%  & 2.442 & 2.611 & 2.442 & 2.465 & \textbf{\underline{2.433}} \\
& Worst & 2.550 & 2.700 & 2.550 & 2.484 & \textbf{2.507} \\
\midrule

Llama
& All   & \textbf{2.719} & 2.736 & \textbf{2.719} & 2.746 & \underline{2.732} \\
& 25\%  & 2.900 & 2.967 & 2.900 & 2.893 & \textbf{\underline{2.888}} \\
& Worst & 3.000 & 3.100 & 3.000 & 2.910 & \textbf{2.936} \\
\bottomrule
\end{tabular}
}
\caption{MIRA development-set results.}
\label{tab:miradev}
\end{table}

\begin{table}[t]
\centering
{\small
\setlength{\tabcolsep}{1mm}
\begin{tabular}{@{}llccccc@{}}
\toprule
Model & Stat. & Avg & PRC & GDRO & Mix. & Con. Mix. \\
\midrule
Qwen
& All   & \textbf{1.129} & \textbf{1.129} & 1.149 & 1.171 & \underline{1.135} \\
& 25\%  & 1.258 & 1.258 & 1.261 & 1.235 & \textbf{1.244} \\
& Worst & 1.400 & 1.400 & 1.350 & 1.237 & \textbf{1.336} \\
\midrule

DeepSeek
& All   & \textbf{1.192} & \textbf{1.192} & 1.201 & 1.204 & \underline{1.198} \\
& 25\%  & 1.294 & 1.294 & 1.294 & 1.271 & \textbf{1.272} \\
& Worst & 1.450 & 1.450 & 1.350 & 1.280 & \textbf{1.297} \\
\midrule

GLM5
& All   & \textbf{1.269} & \textbf{1.269} & 1.286 & 1.329 & \underline{1.275} \\
& 25\%  & 1.414 & 1.414 & 1.444 & 1.379 & \textbf{1.389} \\
& Worst & 1.600 & 1.600 & 1.550 & 1.381 & \textbf{1.492} \\
\midrule

Gemma
& All   & \textbf{1.412} & \textbf{1.412} & 1.435 & 1.457 & \underline{1.419} \\
& 25\%  & 1.628 & 1.628 & 1.564 & 1.523 & \textbf{1.550} \\
& Worst & 1.900 & 1.900 & 1.650 & 1.533 & \textbf{1.612} \\
\midrule

Llama
& All   & \textbf{2.029} & 2.054 & 2.054 & 2.085 & \underline{2.039} \\
& 25\%  & 2.339 & 2.331 & 2.331 & 2.310 &
  \textbf{\underline{2.296}} \\
& Worst & 2.650 & 2.450 & 2.450 & 2.332 & \textbf{2.391} \\
\bottomrule
\end{tabular}
}
\caption{Consumer-finance development-set results. }
\label{tab:findev}
\end{table}

\subsection{Paired Selector Significance Tests}
\label{app:selector-significance}
We use paired seed-level permutation tests to assess the held-out differences between Constrained Mixed GroupDRO and each single-prompt selector. The held-out seed question is the permutation unit. In each of 10,000 permutations, selector labels are swapped within each seed while keeping all 24 question variants and three metric scores together. We then recompute the 72 metric-group means and the three evaluation statistics. We use two-sided permutation $p$-values and apply Holm correction across the nine reported tests. Table~\ref{tab:selector-permutation} reports the complete results.

\begin{table}[t]
\centering
{\small
\setlength{\tabcolsep}{1.2mm}
\begin{tabular}{@{}llrrr@{}}
\toprule
Statistic & Comparator & Mean $\Delta$ & Wins & $p_{\mathrm{Holm}}$ \\
\midrule
Overall Mean & Average selection & +0.002 & 4/10 & $.581$ \\
Overall Mean & Pure GroupDRO & +0.018 & 8/10 & $.006$ \\
Overall Mean & PRC & +0.010 & 5/10 & $.524$ \\
\midrule
Worst 25\% Mean & Average selection & +0.029 & 10/10 & $.002$ \\
Worst 25\% Mean & Pure GroupDRO & +0.036 & 10/10 & $.002$ \\
Worst 25\% Mean & PRC & +0.051 & 10/10 & $<.001$ \\
\midrule
Worst & Average selection & +0.036 & 8/10 & $.114$ \\
Worst & Pure GroupDRO & +0.034 & 8/10 & $.217$ \\
Worst & PRC & +0.063 & 9/10 & $.006$ \\
\bottomrule
\end{tabular}
}
\caption{Paired seed-level permutation tests comparing Constrained Mixed GroupDRO with the single-prompt selectors on held-out data. Positive $\Delta$ means lower loss for Constrained Mixed. Wins count model--domain settings with positive $\Delta$. Holm correction is applied across the nine reported tests.}
\label{tab:selector-permutation}
\end{table}

\subsection{Distribution of Development Score Changes}
\label{app:scorechange}
Table~\ref{tab:overall-score-changes-dev} compares Constrained Mixed GroupDRO with Average selection across all development-set model-question-variant-metric scores. The net changes are $+0.007$ in consumer finance and $+0.011$ in MIRA, showing that the overall development mean changes little in both domains.

\begin{table}[t]
\centering
{\small
\setlength{\tabcolsep}{0.4mm}
\begin{tabular}{@{}lrrrrrr@{}}
\toprule
& \multicolumn{2}{c}{Improved}
& \multicolumn{2}{c}{Worsened}
& \multicolumn{1}{c}{Unch.} & \\
\cmidrule(lr){2-3}
\cmidrule(lr){4-5}
\cmidrule(lr){6-6}
Dev
& \multicolumn{1}{c}{$n$ (\%)}
& \multicolumn{1}{c}{$\bar{\Delta}$}
& \multicolumn{1}{c}{$n$ (\%)}
& \multicolumn{1}{c}{$\bar{\Delta}$}
& \multicolumn{1}{c}{$n$ (\%)}
& \multicolumn{1}{c}{Net $\Delta$} \\
\midrule
Fin.
& 983 (13.7)  & $-0.58$
& 1266 (17.6) & $+0.49$
& 4951 (68.8) & $+0.007$ \\

MIRA
& 1950 (27.1) & $-0.30$
& 2359 (32.8) & $+0.28$
& 2890 (40.1) & $+0.011$ \\
\bottomrule
\end{tabular}
}
\caption{Distribution of development score changes across the full
evaluation set for Constrained Mixed GroupDRO relative to Average
selection. F and M denote consumer finance and MIRA. One MIRA
comparison with an N/A rating is excluded.}
\label{tab:overall-score-changes-dev}
\end{table}

\section{Case studies}
\label{app:case}
\paragraph{Consumer Finance}
We consider a held-out low-financial-literacy question: “What is the difference between a family trust and a regular investment trust, and how do they protect, transfer, and grow money?” 
Average selection chooses a system prompt that emphasizes actionability and asks the model to explain which documents, terms, and numbers the user should check. The response recommends reviewing a trust deed, prospectus, and fees, but does not clearly explain that suitability depends on the family's legal and tax circumstances. It receives scores of $4/4/2$ on $m_1/m_2/m_3$.

Pure GroupDRO chooses a system prompt that emphasizes the preservation of the information needed for judgment and stating important limits and risks. Its response explains the basic purposes of the two trust structures, but omits important legal and return limits and does not provide a clear suitability test. It receives scores of $4/4/3$.

Constrained Mixed GroupDRO places its largest weight ($w=0.506$) on a completeness prompt and distributes the remaining weight across two actionability prompts and one anti-disclaimer/referral-only prompt. The response generated by the highest-weight prompt explains the setup costs, legal and tax conditions, and the information that should be verified. It receives scores of $1/1/2$.

\paragraph{MIRA}
We also consider a held-out low-health-literacy question: ``What should be considered when choosing foods that raise blood sugar slowly for people with type 2 diabetes?'' Average selection chooses a system prompt that emphasizes medical context and completeness. Its response explains that the glycemic index ranges from zero to one hundred, but does not give the cutoff used to identify low-GI foods. It receives scores of $2/2/1$.

Pure GroupDRO chooses a system prompt that explicitly asks the model not to omit criteria, thresholds, or important distinctions. Its response explains that foods with a lower glycemic index cause a more gradual increase in blood glucose, but again omits the low-GI cutoff. It receives scores of $2/2/2$.

The highest-weight prompt in the Constrained Mixed GroupDRO support ($w=0.504$) emphasizes definitions, evaluation criteria, and clinical thresholds. Its response explicitly states that foods with a glycemic index below 55 are classified as low and explains how the threshold, food examples, and portion size relate to post-meal glucose. It receives scores of $1/1/1$.

\section{Additional Candidate-Growth Results}
\label{app:candidate-growth}
Figure~\ref{fig:candidate-growth-dev} reports the development candidate-growth results, while Figure~\ref{fig:candidate-growth-worst} reports the worst-case results on development and held-out test data. Across both domains, performance generally improves as the candidate pool expands, with most gains appearing by $K=25$ or $K=35$.

\begin{figure}[t]
    \centering
    \includegraphics[width=\linewidth]{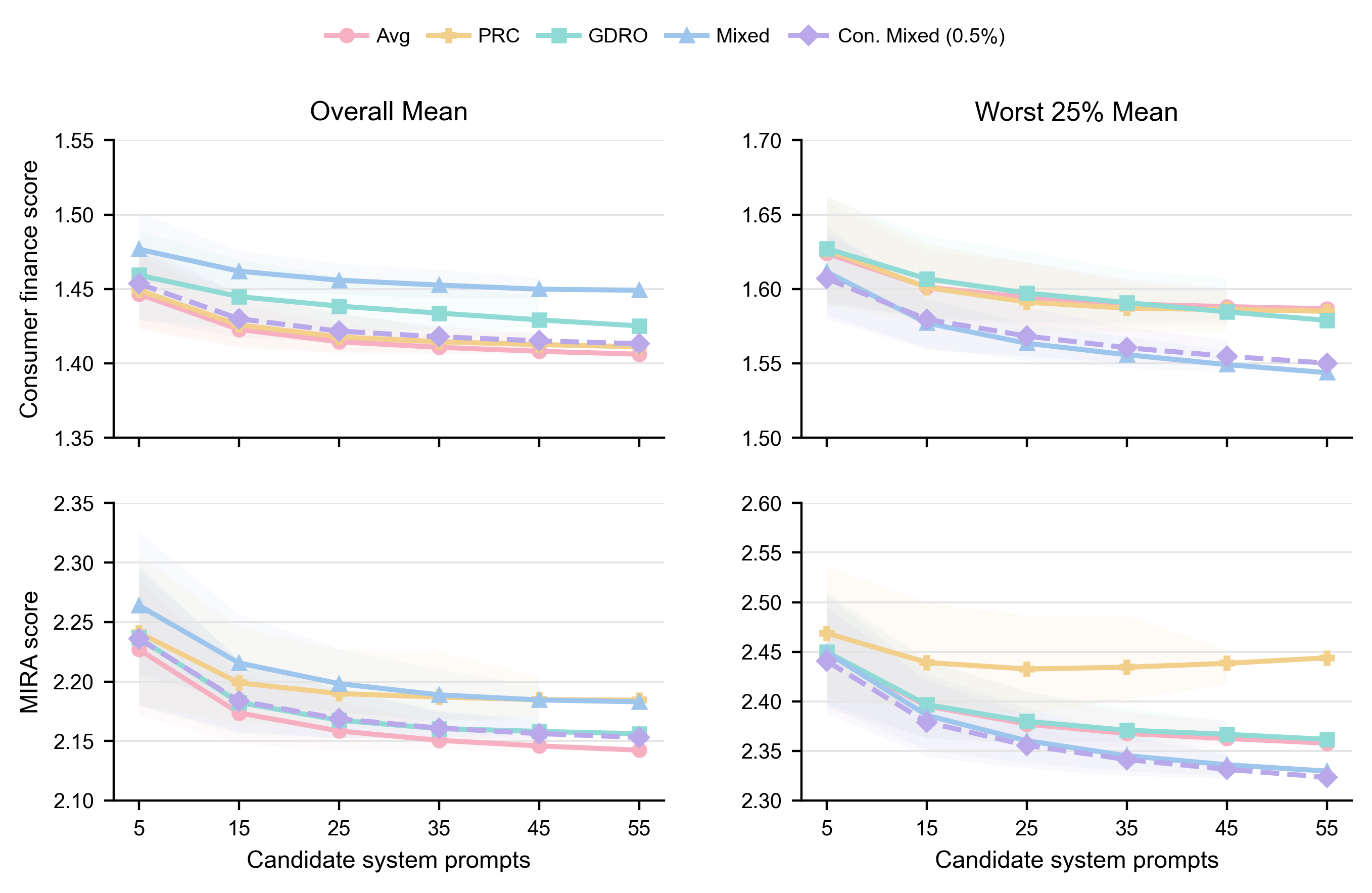}
    \caption{Development candidate-growth results for consumer
    finance and MIRA, averaged over five models. Lower is better.}
    \label{fig:candidate-growth-dev}
\end{figure}

\begin{figure}[t]
    \centering
    \includegraphics[width=\linewidth]{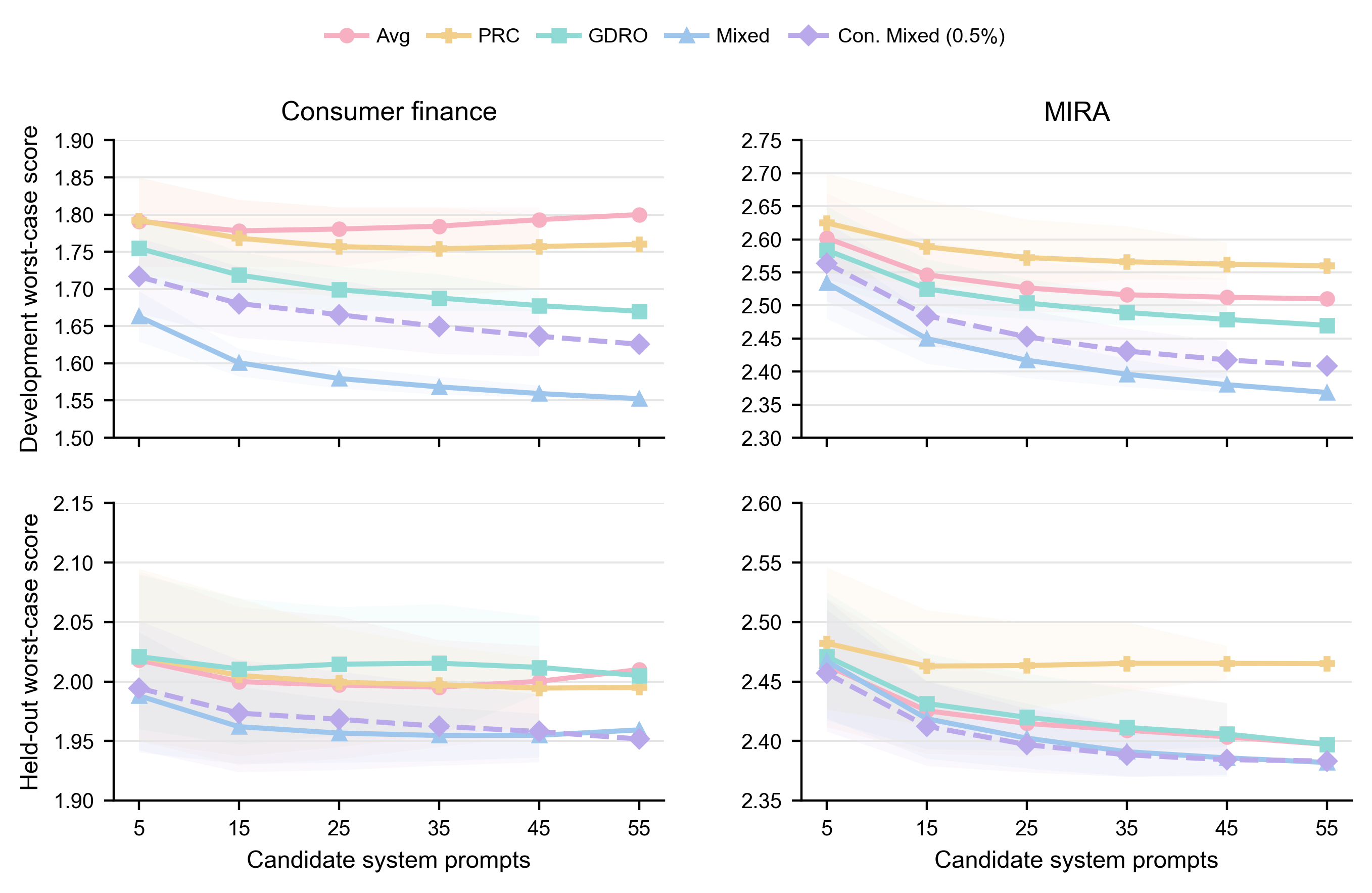}
    \caption{Worst-case candidate-growth results on development and
    held-out test data for consumer finance and MIRA, averaged over five models. Lower is better.}
    \label{fig:candidate-growth-worst}
\end{figure}

Table~\ref{tab:candidate-growth-recovery} further quantifies how much of the improvement from $K=5$ to $K=55$ is recovered by each smaller pool. Let $W_K^{d,s}$ denote the Constrained Mixed GroupDRO (0.5\%) joint worst-case score for pool size $K$, averaged over five models for domain $d$ and split $s$. We compute the recovered improvement as
\[
\operatorname{Recovery}(K) = 100\times \frac{W_5^{d,s}-W_K^{d,s}}{W_5^{d,s}-W_{55}^{d,s}}.
\]
Because lower scores are better, the numerator measures the worst-case reduction obtained at pool size $K$, while the denominator measures the total reduction obtained by increasing the pool from $K=5$ to $K=55$. By construction, recovery is 0\% at $K=5$ and 100\% at $K=55$. For example, the 56.24\% value for consumer-finance development data means that $K=25$ achieves 56.24\% of the total worst-case improvement obtained with the full pool. Across both domains, $K=25$ recovers at least 56.24\% of the full-pool improvement, and $K=35$ recovers at least 73.98\%. These results show that moderate-size pools already provide a substantial share of the benefit observed with 55 system prompts.

\begin{table}[t]
\centering
{\small
\setlength{\tabcolsep}{1mm}
\begin{tabular}{@{}llrrrr@{}}
\toprule
Domain & Split & $K=15$ & $K=25$ & $K=35$ & $K=45$ \\
\midrule
Finance & Dev  & 39.79 & 56.24 & 73.98 & 88.10 \\
Finance & Test & 49.07 & 61.21 & 74.92 & 85.25 \\
MIRA    & Dev  & 51.05 & 71.70 & 85.52 & 94.16 \\
MIRA    & Test & 60.23 & 81.20 & 92.96 & 98.36 \\
\bottomrule
\end{tabular}
}
\caption{Percentage of the $K=5$ to $K=55$ improvement in the Constrained Mixed (0.5\%) joint worst-case score recovered by smaller system-prompt pools, averaged over five models. By construction, $K=5$ is 0\% and $K=55$ is 100\%. Higher is better.}
\label{tab:candidate-growth-recovery}
\end{table}

\section{Family-Level Weight Analysis}
\label{app:family}
Figure~\ref{fig:family-weight} aggregates the Constrained Mixed GroupDRO weights by mitigation family. For each model and domain, the weight of a family is the sum of the weights assigned to all system prompts in that family. This shows whether the mixture combines different mitigation goals or different wording variants within the same family.

In MIRA, the largest weights generally fall on completeness-related families, although some models also use actionability and prompts that avoid referral-only answers. In consumer finance, the weights are more concentrated on actionability. For GLM5 and Llama in consumer finance, all family-level weight falls on the actionability family, even though the weight may be distributed across multiple prompts within that family. Thus, complementarity can arise both across mitigation families and across wording variants within the same family.

\begin{figure}[t]
    \centering
    \includegraphics[width=\linewidth]{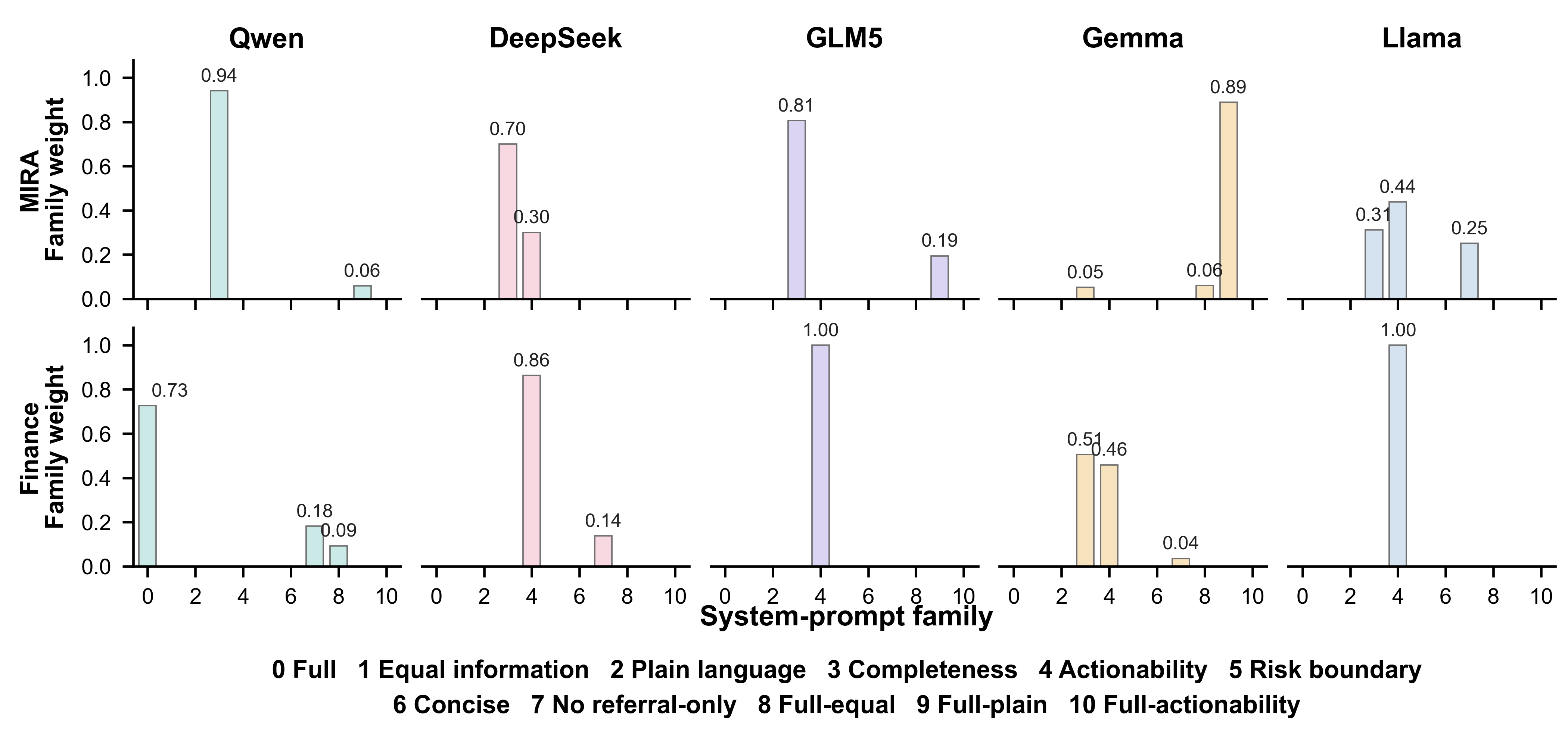}
    \caption{System-prompt weights aggregated by family.}
    \label{fig:family-weight}
\end{figure}

\section{Development Mean-Tail Pareto Tradeoff}
\label{app:pareto}
Figure~\ref{fig:dev-pareto} reports the development mean--tail tradeoff for both domains. Lower values on both axes are better. Constrained Mixed (0.5\%) provides a strong balance between the Overall Mean and Worst 25\% Mean. In both domains, no tested setting achieves lower development scores on both measures, supporting our use of the 0.5\% constraint in the main experiments.

\begin{figure}[t]
    \centering
    \includegraphics[width=\linewidth]{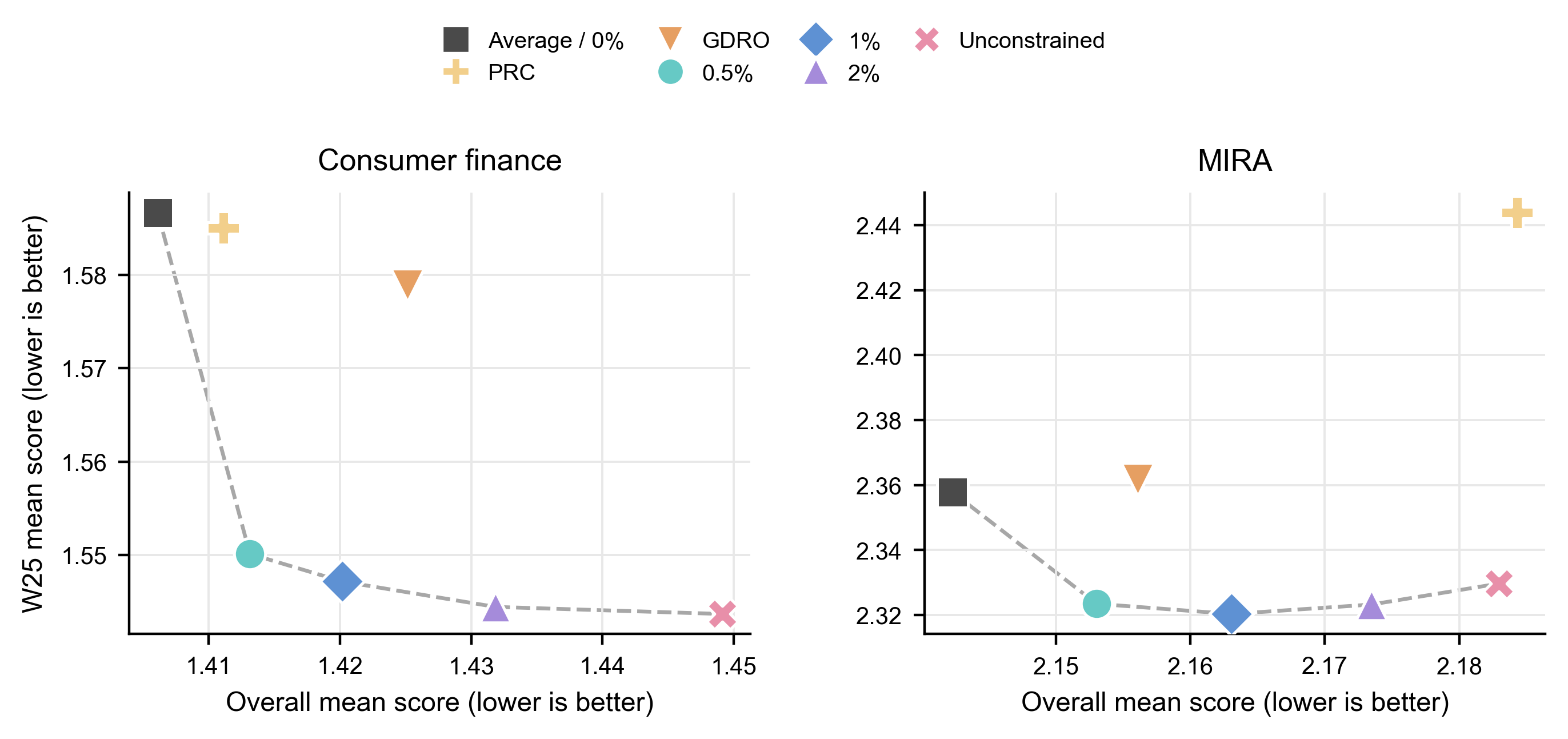}
    \caption{Development mean-tail tradeoff for consumer finance
    and MIRA, averaged over five models. Lower is better.}
    \label{fig:dev-pareto}
\end{figure}

\end{document}